\theoremstyle{plain}
\theoremstyle{plain}
\theoremstyle{plain}
\newtheorem{lemma}{\protect\lemmaname}
\theoremstyle{plain}
\newtheorem{theorem}{\protect\theoremname}
\theoremstyle{plain}
\theoremstyle{definition}
\theoremstyle{definition}
\theoremstyle{definition}
\providecommand{\claimname}{Claim}
\providecommand{\lemmaname}{Lemma}
\providecommand{\propositionname}{Proposition}
\providecommand{\theoremname}{Theorem}
\providecommand{\corollaryname}{Corollary} 
\providecommand{\definitionname}{Definition}
\providecommand{\assumptionname}{Assumption}
\providecommand{\remarkname}{Remark}
\newcommand{\openone}{\mathds{1}}
\newcommand{\vbar}{\overline{v}}
\newcommand{\Hc}{\mathcal{H}}
\newcommand{\xv}{\mathbf{x}}
\newcommand{\yv}{\mathbf{y}}
\newcommand{\Fc}{\mathcal{F}}
\newcommand{\EE}{\mathbb{E}}
\newcommand{\PP}{\mathbb{P}}
\newcommand{\RR}{\mathbb{R}}
\newcommand{\ZZ}{\mathbb{Z}}
\newcommand{\Rc}{\mathcal{R}}
\newcommand{\iv}{\mathbf{i}}
\newcommand{\bzero}{\mathbf{0}}
\newcommand{\bone}{\mathbf{1}}
\newcommand{\Dbar}{\overline{D}}
\newcommand{\kSE}{k_{\text{SE}}}
\newcommand{\kMat}{k_{\text{Mat\'ern}}}
\title{Lower Bounds on Regret for Noisy\\ Gaussian Process Bandit Optimization}
\author{Jonathan Scarlett, Ilija Bogunovic, and Volkan Cevher \\ [3mm]
    \small Laboratory for Information and Inference Systems (LIONS), EPFL \\
    \{jonathan.scarlett, ilija.bogunovic, volkan.cevher\}@epfl.ch}
\begin{document}

\maketitle

\begin{abstract}
    In this paper, we consider the problem of sequentially optimizing a black-box function $f$ based on noisy samples and bandit feedback.  We assume that $f$ is smooth in the sense of having a bounded norm in some reproducing kernel Hilbert space (RKHS), yielding a commonly-considered non-Bayesian form of Gaussian process bandit optimization.  We provide algorithm-independent lower bounds on the simple regret, measuring the suboptimality of a single point reported after $T$ rounds, and on the cumulative regret, measuring the sum of regrets over the $T$ chosen points.  
    
    For the isotropic squared-exponential kernel in $d$ dimensions, we find that an average simple regret of $\epsilon$ requires $T = \Omega\big(\frac{1}{\epsilon^2} (\log\frac{1}{\epsilon})^{d/2}\big)$, and the average cumulative regret is at least $\Omega\big( \sqrt{T(\log T)^{d/2}} \big)$, thus matching existing upper bounds up to the replacement of $d/2$ by $2d+O(1)$ in both cases.  For the Mat\'ern-$\nu$ kernel, we give analogous bounds of the form $\Omega\big( (\frac{1}{\epsilon})^{2+d/\nu}\big)$ and $\Omega\big( T^{\frac{\nu + d}{2\nu + d}} \big)$, and discuss the resulting gaps to the existing upper bounds.
\end{abstract}



%
%

\section{Introduction}

The problem of sequentially optimizing a black-box function based on noisy bandit feedback has recently attracted a great deal of attention, and finds applications in robotics, environmental monitoring, and hyperparameter optimization in machine learning, just to name a few.  In order to make this problem tractable, one must place smoothness assumptions on the function.  Gaussian processes (GPs) provide a versatile means for doing this, capturing the smoothness properties through a suitably-chosen kernel.

Within the GP framework, there are two distinct viewpoints: Bayesian and non-Bayesian.  In the Bayesian viewpoint, one assumes that the underlying function is random according to a GP distribution with the specified kernel.  On the other hand, in the non-Bayesian viewpoint, one treats the function as fixed and unknown, and only assumes that it is has a low norm in the reproducing kernel Hilbert space (RKHS) corresponding to the kernel.

Upper bounds on the \emph{regret}, a widely-adopted performance measure (see Section \ref{sec:setup}), have been given for a variety of practical GP bandit optimization\footnote{We use the terminology \emph{Gaussian process bandit optimization} to refer collectively to the Bayesian and non-Bayesian settings, whereas the former is typically referred to as \emph{Bayesian optimization}.} algorithms, including upper confidence bound (GP-UCB) \cite{Sri09} and various alternatives \cite{Rus14,Wan16,Bog16a}.  However, to our knowledge, there are no existing \emph{algorithm-independent lower bounds} on the regret in the noisy setting, thus making it unclear to what extent the upper bounds can be improved.

In this paper, we provide such lower bounds in the \emph{non-Bayesian} setting, focusing on two specific widely-used kernels: squared exponential (SE) and M\'atern.  For the SE kernel, our lower bound nearly matches existing upper bounds, whereas the gap is seen to be more significant in the case of the M\'atern kernel.  As a result, the latter deserves further study both in terms of the lower bounds and the existing algorithms; see Section \ref{sec:results} for further discussion.

\subsection{Setup} \label{sec:setup}

We consider the optimization of a function $f$ defined on a compact domain $D = [0,1]^d$.  The smoothness is dictated by a \emph{kernel function} $k(x,x')$ \cite{Ras06}; specifically, we assume that the corresponding RKHS norm is bounded as $\|f\|_{k} \le B$.  The set of all such functions is denoted by $\Fc_k(B)$, and for a given $f \in \Fc_k(B)$, we let $x^*$ denote an arbitrary maximizer of $f$.  In the $t$-th round, a point $x_t$ is selected, and a noisy sample $y_t = f(x_t) + z_t$ is observed, with $z_t \sim N(0,\sigma^2)$ for some $\sigma^2 > 0$.  We assume independence among the noise terms at differing time instants.

We consider two widely considered performance metrics:
\begin{itemize}
    \item {\bf Simple regret:} At the end of $T$ rounds, an additional point $x^{(T)}$ is reported, and the simple regret incurred is $r^{(T)} = f(x^*) - f(x^{(T)})$.  We seek to characterize how large $T$ must be to permit $\EE[r^{(T)}] \le \epsilon$.
    \item {\bf Cumulative regret:} At the end of $T$ rounds, the cumulative regret incurred is $R_T = \sum_{t=1}^T r_t$, where $r_t = f(x^*) - f(x_t)$.  We seek to provide lower bounds on $\EE[R_T]$.
\end{itemize}

We focus on two commonly-considered kernels, namely, squared exponential (SE) and Mat\'ern \cite{Ras06}: 
\begin{align}
    \kSE(x,x') &= \exp \bigg(- \dfrac{\|x - x'\|^2}{2l^2} \bigg) \label{eq:kSE} \\ 
    \kMat(x,x') &= \dfrac{2^{1-\nu}}{\Gamma(\nu)} \bigg(\dfrac{\sqrt{2\nu}\|x - x'\|}{l}\bigg)^{\nu}  J_{\nu}\bigg(\dfrac{\sqrt{2 \nu}\|x - x'\|}{l} \bigg), \label{eq:kMat}
    \end{align}
where $l>0$ denotes the length-scale, $\nu > 0$ is an additional parameter that dictates the smoothness,  and $J_{\nu}$ denotes the modified Bessel function.  Although we only consider these specific kernels, the approach that we take can also potentially be applied to other stationary kernels.

\subsection{Related Work} \label{sec:previous_work}

A recent review of Gaussian process bandit optimization is given in \cite{Sha16}. Upper bounds on the regret for specific algorithms are given for the Bayesian and non-Bayesian settings with cumulative regret in \cite{Sri09,Wan14a,Rus14}, and with simple regret in \cite{Con13,Bog16a}.  Of these, \cite{Wan14a} is the only one that did not assume perfect knowledge of the kernel, but this distinction will not be important for our purposes, since our lower bounds hold in either case.  Bounded RKHS norm assumptions have also been used in other problems, such as uniform function approximation \cite{Ras12,Ras14}.  

We are not aware of any existing lower bounds for the noisy setting.  However, \cite{Bul11} is related to our work, providing tight lower bounds on the simple regret for the M\'atern kernel in the {\em noiseless} setting.  While our high-level approach is similar to \cite{Bul11} in the sense of considering a class of {\em needle-in-haystack} type functions that are difficult to distinguish, there are a number of additional challenges that we need to address:
\begin{enumerate}
    \item Incorporating noise is non-trivial, and it is unclear {\em a priori} the extent to which it affects the regret bounds.  For instance, in the Bayesian setting, $O(1)$ cumulative regret is possible for noiseless observations \cite{Fre12,Kaw15}, whereas the best known bounds for the noisy case are between $O(\sqrt{T})$ and $o(T)$ \cite{Sri09}.
    \item We consider not only the simple regret, but also the cumulative regret, which is not captured by the framework of \cite{Bul11}.  
    \item While \cite[Thm.~1]{Bul11} gives a tight lower bound for the M\'atern kernel, no analogous result is given for the widely-adopted SE kernel.  Moreover, an inspection of the analysis in \cite{Bul11} reveals that the {\em compactly supported} functions used therein have an infinite RKHS norm for the SE kernel, and thus cannot be used.  To circumvent this issue, we consider a different class of functions that are compactly supported {\em in frequency domain}, and hence have unbounded support in the spatial domain $\RR^d$.  This introduces additional challenges into the analysis, since we can no longer directly state that most samples are uninformative as a result of the sampled function value being exactly zero.
\end{enumerate}
Another related work on the noiseless setting is  \cite{Gru10}, which provided upper and lower bounds for the Bayesian case under a H\"older-continuity assumption on the kernel.  

While multi-armed bandit problems \cite{Bub12,Aue95,Aud10} typically consist of \emph{finite} action spaces, some continuous variants have been proposed \cite{Kle04,Kle08}.  However, these make significantly different smoothness assumptions of the \emph{Lipschitz} variety, and as a result, the results and analysis techniques do not apply to our setting of bounded RKHS norm.  In particular, Lipschitz properties do not imply bounds on the RKHS norm, nor vice versa.

\section{Main Results} \label{sec:results}

In all of our results, we assume that the dimension $d$ in the definition of $D = [0,1]^d$, as well as the parameters $l$ and $\nu$ in the kernels in \eqref{eq:kSE}--\eqref{eq:kMat}, behave as $\Theta(1)$.  In contrast, we allow the RKHS norm upper bound $B$ and the noise level $\sigma$ to scale generally.  We let $O^*(\cdot)$ denote asymptotic expressions up to dimension-independent logarithmic factors, e.g., $4\sqrt{T}(\log T)^{d+3} = O^*(\sqrt{T}(\log T)^d)$.

Our main results are expressed in terms of the \emph{average} regret.  However, it is straightforward to modify the proofs in order to show that the scaling laws are the same when considering regret bounds that hold with constant (but arbitrarily close to one) probability.  See Section \ref{sec:HIGH_PROB} for details.

We first provide our main result for the simple regret.

\begin{theorem} \label{thm:instantaneous}
    \emph{(Simple Regret)}
    Fix $\epsilon \in \big(0,\frac{1}{2}\big)$, $B > 0$, and $T \in \ZZ$.   Suppose there exists an algorithm that, for any $f \in \Fc_k(B)$, achieves an average simple regret $\EE[r^{(T)}] \le \epsilon$ after $T$ rounds.  Then, provided that $\frac{\epsilon}{B}$ is sufficiently small, we have the following:
    \begin{enumerate}
        \item For \emph{$k = \kSE$}, it is necessary that
        \begin{equation}
            T = \Omega\bigg( \frac{\sigma^2}{\epsilon^2} \Big(\log\frac{B}{\epsilon}\Big)^{d/2} \bigg). \label{eq:inst_se}
       \end{equation}
        \item For \emph{$k = \kMat$}, it is necessary that
        \begin{equation}
            T = \Omega\bigg( \frac{\sigma^2}{\epsilon^2} \Big(\frac{B}{\epsilon}\Big)^{d/\nu} \bigg). \label{eq:inst_Mat}
       \end{equation}
    \end{enumerate}
\end{theorem}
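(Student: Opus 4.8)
The plan is to prove Theorem~\ref{thm:instantaneous} by a \emph{needle-in-a-haystack} reduction to an $M$-ary hypothesis test. I would build a finite family $f_1,\dots,f_M\in\Fc_k(B)$, where $f_i$ is a single sharp bump of height $h:=4\epsilon$ centred at a grid point $x_i$ in the domain, and use the null function $f\equiv 0$ as a statistical reference. The grid and the bump width $w$ are chosen so that (i) the ``half-height'' sets $\Rc_i:=\{x\in[0,1]^d:f_i(x)\ge h/2\}$ are pairwise disjoint, and (ii) $M=\Theta(w^{-d})$ is as large as the constraint $\|f_i\|_k\le B$ permits. The goal is to show $T=\Omega(\sigma^2 M/\epsilon^2)$; substituting the kernel-specific value of $M$ then yields \eqref{eq:inst_se} and \eqref{eq:inst_Mat}.

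\emph{Constructing the hard family.} The compactly-supported bumps of the noiseless analysis in \cite{Bul11} have infinite RKHS norm for $\kSE$, so I would instead use bumps compactly supported \emph{in frequency}. Fix a real, even, nonnegative $\eta\in C_c^\infty(\RR^d)$ supported in a small ball, set $\phi:=\eta*\eta$, and let $\psi:=\check\phi=(\check\eta)^2\ge 0$, which is Schwartz with $\psi(0)=\max\psi>0$; then put $g(x):=h\,\psi(x/w)/\psi(0)$, so that $0\le g\le h$, $g(0)=h=\max g$, and $\hat g(\omega)\propto\phi(w\omega)$ is supported on $\|\omega\|\lesssim 1/w$. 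Substituting $\hat g$ into $\|g\|_k^2=\int|\hat g(\omega)|^2/S_k(\omega)\,d\omega$ and using the known spectral densities, a short computation gives $\|g\|_k^2\le Ch^2 w^d e^{l^2/(2w^2)}$ for $\kSE$ and $\|g\|_k^2\le Ch^2 w^{-2\nu}$ for $\kMat$; requiring each to be $\le B^2$ (with $h=4\epsilon$) forces $w=\Theta\big((\log\tfrac B\epsilon)^{-1/2}\big)$ resp.\ $w=\Theta\big((\tfrac\epsilon B)^{1/\nu}\big)$, so that $M=\Theta(w^{-d})$ equals $\Theta\big((\log\tfrac B\epsilon)^{d/2}\big)$ resp.\ $\Theta\big((\tfrac B\epsilon)^{d/\nu}\big)$. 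For $\epsilon/B$ small this $w$ is small, so $M$ exceeds any fixed constant and a grid with spacing a large constant times $w$ fits in $[0,1]^d$ with the $\Rc_i\subseteq x_i+w\{\psi\ge\psi(0)/2\}$ pairwise disjoint, since $\{\psi\ge\psi(0)/2\}$ is bounded. Each $g(\cdot-x_i)$ restricted to $[0,1]^d$ then lies in $\Fc_k(B)$, as the RKHS norm over the domain is at most that of any $\RR^d$-extension. The crucial extra property is that $g$, being Schwartz, decays faster than any polynomial, so its grid translates obey $\sum_{i=1}^M f_i(x)^2\le C_d h^2$ for \emph{every} $x$; this uniform bound plays the role that ``$f_i$ vanishes away from its peak'' plays in the compactly-supported setting.

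\emph{The information-theoretic step.} With the family in hand, the rest is a change-of-measure argument. If $f=f_i$ then $\max f_i=h$ and $f_i<h/2$ off $\Rc_i$, so $x^{(T)}\notin\Rc_i$ implies $r^{(T)}\ge h/2=2\epsilon$; since $\EE_i[r^{(T)}]\le\epsilon$, Markov's inequality gives $\PP_i[x^{(T)}\in\Rc_i]\ge 1/2$, where $\PP_i$ (resp.\ $\PP_0$) denotes the law of $(x_1,y_1,\dots,x_T,y_T,x^{(T)})$ under $f=f_i$ (resp.\ $f\equiv 0$). A standard chain-rule computation for sequentially chosen Gaussian observations (valid for randomized strategies by conditioning on the internal randomness) gives $\mathrm{KL}(\PP_0\|\PP_i)=\frac1{2\sigma^2}\EE_0\big[\sum_{t=1}^T f_i(x_t)^2\big]=:\delta_i$, whence, using $\sum_i f_i(x)^2\le C_d h^2$ and interchanging the sums, $\bar\delta:=\frac1M\sum_i\delta_i\le C_d h^2 T/(2\sigma^2 M)$. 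Applying Pinsker's inequality to the event $\{x^{(T)}\in\Rc_i\}$ gives $\PP_i[x^{(T)}\in\Rc_i]\le\PP_0[x^{(T)}\in\Rc_i]+\sqrt{\delta_i/2}$; summing over $i$ and using that the $\Rc_i$ are disjoint (so $\sum_i\PP_0[x^{(T)}\in\Rc_i]\le1$), concavity of the square root (so $\sum_i\sqrt{\delta_i/2}\le M\sqrt{\bar\delta/2}$), and $\PP_i[x^{(T)}\in\Rc_i]\ge1/2$, yields $\tfrac M2\le1+M\sqrt{\bar\delta/2}$, i.e.\ $\bar\delta=\Omega(1)$. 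Combining this with $\bar\delta\le C_d h^2 T/(2\sigma^2 M)$ and $h=4\epsilon$ gives $T=\Omega(\sigma^2 M/\epsilon^2)$, which is the claimed bound.

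\emph{Main obstacle.} The delicate part is the construction: replacing the spatial needles of the noiseless theory by frequency-domain needles, verifying the RKHS-norm bounds (it is here that the factor $(\log\tfrac B\epsilon)^{d/2}$ appears for $\kSE$, since the exponentially-decaying spectral density forces $w\gtrsim(\log\tfrac B\epsilon)^{-1/2}$ to keep $\|g\|_k$ bounded), and controlling the now-nonzero spatial tails through the uniform estimate $\sum_i f_i(x)^2\le C_d h^2$. Once that is in place the change-of-measure argument above is routine, and the high-probability variant follows by a minor modification.
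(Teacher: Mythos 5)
Your proposal is correct, and the hard instance is essentially the paper's: a grid of $M=\Theta(w^{-d})$ translated needles that are compactly supported in the frequency domain (the paper takes $g$ to be a scaled inverse Fourier transform of the bump function $H$ in \eqref{eq:Hf}; your $\psi=(\check\eta)^2$ is a minor variant that additionally guarantees nonnegativity, which nothing in the argument actually requires), the same spectral computation of $\|g\|_k$ via Lemma \ref{lem:norm} yielding $w=\Theta\big((\log\frac{B}{\epsilon})^{-1/2}\big)$ resp.\ $w=\Theta\big((\frac{\epsilon}{B})^{1/\nu}\big)$, the same use of the restriction property to crop to $[0,1]^d$, and the same change of measure to the all-zero function with the adaptive chain-rule identity $D(P_0\|P_m)=\frac{1}{2\sigma^2}\EE_0\big[\sum_t f_m(x_t)^2\big]$. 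Where you genuinely diverge is the endgame. The paper upper-bounds the \emph{expected reward} $\EE_m[f_m(x^{(T)})]$ by summing $\vbar_m^j\,\PP_m[x^{(T)}\in\Rc_j]$ over a full partition $\{\Rc_j\}$ of the domain and invoking Lemma \ref{lem:auer} together with all three summation estimates of Lemma \ref{lem:vbar_sums}; this is needed because the needles have unbounded spatial support, so a reported point far from the peak of $f_m$ can still collect reward. You instead define $\Rc_i$ as the half-height set of $f_i$, use Markov to get $\PP_i[x^{(T)}\in\Rc_i]\ge\frac12$ from the hypothesis $\EE_i[r^{(T)}]\le\epsilon$, and run a standard Fano/Le Cam count (Pinsker, disjointness, Jensen) to force $\frac1M\sum_i D(P_0\|P_i)=\Omega(1)$. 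This sidesteps parts 1 and 2 of Lemma \ref{lem:vbar_sums} entirely -- only the pointwise bound $\sum_i f_i(x)^2=O(\epsilon^2)$ survives, and your Schwartz-decay argument for it matches the paper's proof of part 3 -- so your route is somewhat leaner for simple regret. The trade-off is that the paper's reward-based formulation carries over almost verbatim to the cumulative regret of Theorem \ref{thm:cumulative} (replace $\openone\{x^{(T)}\in\Rc_j\}$ by $N_j\in[0,T]$ in Lemma \ref{lem:auer}), whereas your Markov-on-the-reported-point step has no direct analogue there; for the statement at hand, both arguments are complete and yield the same constants up to $\Theta(1)$ factors.
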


Our main result for the cumulative regret is as follows.

\begin{theorem} \label{thm:cumulative}
    \emph{(Cumulative Regret)}
    Fix $B > 0$ and $T \in \ZZ$.   Suppose there exists an algorithm that, for any $f \in \Fc_k(B)$, achieves an average cumulative regret of $\EE[R_T]$ after $T$ rounds.  Then, we have the following:
    \begin{enumerate}
        \item For $k = \kSE$, it is necessary that
        \begin{equation}
            \EE[R_T] = \Omega\Bigg( \sqrt{T\sigma^2 \Big(\log \frac{B^2 T}{\sigma^2} \Big)^{d/2} } \Bigg) \label{eq:cumul_SE}
       \end{equation}
       provided that $\frac{\sigma}{B} = O\big(\sqrt{T}\big)$ with a sufficiently small implied constant.
        \item For $k = \kMat$, it is necessary that
        \begin{equation}
            \EE[R_T] = \Omega\bigg( B^{\frac{d}{2\nu + d}} \sigma^{\frac{2\nu}{2\nu + d}} T^{\frac{\nu + d}{2\nu + d}} \bigg) \label{eq:cumul_Mat}
       \end{equation}
       provided that $\frac{\sigma}{B} = O\big(\sqrt{T}\big)$ with a sufficiently small implied constant.
    \end{enumerate}
\end{theorem}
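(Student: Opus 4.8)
The plan is to derive Theorem~\ref{thm:cumulative} from Theorem~\ref{thm:instantaneous} via a standard online-to-batch reduction: an algorithm with small \emph{cumulative} regret can be converted into one with small \emph{simple} regret. Concretely, suppose an algorithm attains $\EE[R_T] \le \bar R_T$ after $T$ rounds for every $f \in \Fc_k(B)$, where $\bar R_T$ is the claimed uniform bound. Run it for all $T$ rounds and then report $x^{(T)} = x_\tau$, where $\tau$ is drawn uniformly from $\{1,\dots,T\}$, independently of everything else. Since $\EE_\tau\big[f(x^*) - f(x_\tau)\big] = \tfrac1T\sum_{t=1}^T r_t = \tfrac1T R_T$, taking expectations yields $\EE[r^{(T)}] = \bar R_T/T =: \epsilon$ for every $f \in \Fc_k(B)$, so Theorem~\ref{thm:instantaneous} applies with this $\epsilon$ (once its hypotheses are verified).

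Before invoking Theorem~\ref{thm:instantaneous}, I would dispose of a trivial regime: if $\bar R_T$ already exceeds the bound claimed in \eqref{eq:cumul_SE} (resp.\ \eqref{eq:cumul_Mat}), there is nothing to prove. Otherwise $\bar R_T$, and hence $\epsilon = \bar R_T/T$, is small, and using the assumption $\sigma/B = O(\sqrt T)$ with a sufficiently small implied constant (equivalently, $B^2T/\sigma^2$ bounded below by a large constant), together with the crude bound $\bar R_T \le 2BT$ coming from $\|f\|_k \le B$, one checks that $\epsilon/B$ is small enough and $\epsilon$ lies in the admissible range for Theorem~\ref{thm:instantaneous}.

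It then remains to substitute $\epsilon = \bar R_T/T$ into \eqref{eq:inst_se}--\eqref{eq:inst_Mat} and rearrange. For $k = \kSE$, \eqref{eq:inst_se} gives $T \ge c\,\frac{\sigma^2 T^2}{\bar R_T^{\,2}}\big(\log\frac{BT}{\bar R_T}\big)^{d/2}$, i.e.\ $\bar R_T \ge \sqrt{c\,\sigma^2 T\,\big(\log\frac{BT}{\bar R_T}\big)^{d/2}}$; one then replaces $\log\frac{BT}{\bar R_T}$ by $\tfrac14\log\frac{B^2T}{\sigma^2}$, using the non-trivial-regime bound $\bar R_T \le \sqrt{\sigma^2 T(\log\frac{B^2T}{\sigma^2})^{d/2}}$ to write $\frac{BT}{\bar R_T} \ge \frac{B\sqrt T}{\sigma}(\log\frac{B^2T}{\sigma^2})^{-d/4}$ and absorbing the resulting $\log\log$ term since $B^2T/\sigma^2$ is large, which yields \eqref{eq:cumul_SE}. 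For $k = \kMat$, \eqref{eq:inst_Mat} gives $T \ge c\,\frac{\sigma^2 T^2}{\bar R_T^{\,2}}\big(\frac{BT}{\bar R_T}\big)^{d/\nu}$, i.e.\ $\bar R_T^{\,2+d/\nu} \ge c\,\sigma^2 B^{d/\nu} T^{\,1+d/\nu}$; taking the $(2+d/\nu)$-th root and simplifying the exponents via $\frac{d/\nu}{2+d/\nu} = \frac{d}{2\nu+d}$, $\frac{2}{2+d/\nu} = \frac{2\nu}{2\nu+d}$, $\frac{1+d/\nu}{2+d/\nu} = \frac{\nu+d}{2\nu+d}$ gives \eqref{eq:cumul_Mat}, with no logarithmic term and hence a purely algebraic final step.

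The reduction and the rearrangement are routine; the only place requiring genuine care is the SE logarithm bookkeeping — converting the implicit $\log\frac{BT}{\bar R_T}$ into the explicit $\log\frac{B^2T}{\sigma^2}$ — together with making precise the conditions on $\sigma/B$ versus $\sqrt T$ under which both the trivial-regime dichotomy and the hypotheses of Theorem~\ref{thm:instantaneous} hold simultaneously. All of the substantive content (the needle-in-haystack construction and the associated change-of-measure argument) is already packaged inside Theorem~\ref{thm:instantaneous}.
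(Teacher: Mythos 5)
Your proposal is correct, but it takes a genuinely different route from the paper. The paper does \emph{not} deduce Theorem~\ref{thm:cumulative} from Theorem~\ref{thm:instantaneous}; instead it reruns the change-of-measure machinery directly on the cumulative reward, applying Lemma~\ref{lem:auer} with $a(\yv)=N_j$ (taking values in $[0,T]$, whence the extra factor of $T$ multiplying the divergence term in \eqref{eq:VT_bound_5c}), arriving at $\EE[R_T]\ge T\epsilon\big(\tfrac32-\tfrac{C'\epsilon}{\sigma}\sqrt{T/M}\big)$ and then choosing $\epsilon$ to nearly saturate \eqref{eq:T_final2c}; the resulting implicit equation $\epsilon=\Theta\big(\sqrt{\tfrac{\sigma^2}{T}M(\epsilon)}\big)$ is resolved exactly as in your log-bookkeeping step. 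Your online-to-batch reduction (report $x_\tau$ for $\tau$ uniform in $\{1,\dots,T\}$, so that $\EE[r^{(T)}]\le\bar R_T/T$, then invoke Theorem~\ref{thm:instantaneous} with $\epsilon=\bar R_T/T$ in the contrapositive) reaches the same implicit inequality $\bar R_T^2\ge c\,\sigma^2 T\,M(\bar R_T/T)$ without redoing the Auer/divergence argument, and is lossless here because the direct proof's dependence on $T$ in the second term of \eqref{eq:VT_bound_5c} is no better than what the reduction yields. What the reduction buys is modularity; what the paper's direct argument buys is a slightly stronger intermediate statement (an explicit hard instance on which the cumulative regret is at least $T\epsilon$ for the \emph{same} function class, which the paper then reuses verbatim for the high-probability version in Section~\ref{sec:HIGH_PROB}, where the bound $R_T\le 4T\epsilon$ on the same instances is needed). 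Two small points to tidy in your write-up: the reporting rule $x_\tau$ uses external randomness, so one should note that Theorem~\ref{thm:instantaneous} (whose proof applies Lemma~\ref{lem:auer} to $a(\yv)=\openone\{x^{(T)}\in\Rc_j\}$) extends to randomized recommendations by replacing this indicator with $\PP[x^{(T)}\in\Rc_j\,|\,\yv]\in[0,1]$; and the equality $\EE[r^{(T)}]=\bar R_T/T$ should be an inequality, since $\bar R_T$ is a uniform upper bound over $f$. Neither affects the argument.
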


We note that the condition $\frac{\sigma}{B} = O\big(\sqrt{T}\big)$ is assumed for technical reasons, and is quite mild.  In particular, below we will focus primarily on the case that $\sigma$ and $B$ are constants that do not scale with $T$, in which case this condition is trivially satisfied.

\textbf{Comparisons to Upper Bounds:}
The best known upper bounds on the cumulative regret were given for the upper confidence bound (GP-UCB) algorithm \cite{Sri09}, where bounded noise was assumed.  While Gaussian noise has unbounded support, it is bounded by $O(\log T)$ for all $t=1,\dotsc,T$ with high probability, and hence the bounded noise results transfer to the Gaussian setting at the expense of additional dimension-independent logarithmic factors.  

A summary of the comparisons is shown in Table \ref{tbl:summary}, and we proceed by discussing the entries in more detail.\footnote{This version of Table \ref{tbl:summary} corrects some minor mistakes in the previous version of this manuscript, where a few of the exponents $(\log(\cdot))^{d/2}$ vs.~$(\log(\cdot))^{d}$ vs.~$(\log(\cdot))^{2d}$ were off by a factor of two for the SE kernel.}  We first compare our Theorem \ref{thm:cumulative} (cumulative regret) to \cite{Sri09} in the case that $\sigma^2$ and $B$ behave as $\Theta(1)$; we will later discuss the dependence on these parameters.  For fixed $\sigma^2$, the upper bound in \cite{Sri09} is of the form
\begin{equation}
    R_T = O^*\big(\sqrt{TB\gamma_T + T\gamma_T^2}\big), \label{eq:cumul_upper}
\end{equation} 
where, letting $I(X;Y)$ denote the mutual information \cite{Cov01}, we define
\begin{equation} 
    \gamma_T = \max_{x_1,\dotsc,x_T} \max_{S \,:\, |S| = T} I(f;\mathbf{y}_S), \qquad f \sim \mathcal{GP}(0,k). \label{eq:gamma_def}
\end{equation}
This represents the maximum amount of information that a set of $T$ noisy observations $\yv_S = (y_1,\dotsc,y_T)$ (corresponding to $\xv_S = (x_1,\dotsc,x_T)$) can reveal about a zero-mean Gaussian process $f$ with kernel $k$. We have $\gamma_T = O\big( (\log T)^{d+1}\big)$ for the squared-exponential kernel, and $\gamma_T = O^*\big( T^{\frac{d(d+1)}{2\nu + d(d+1)}}\big)$ for the Mat\'ern kernel \cite{Sri09}.

Hence, for $\kSE$, the upper bound in \eqref{eq:cumul_upper} is $O^*\big( \sqrt{T (\log T)^{2d}} \big)$ and the lower bound in \eqref{eq:inst_se} is $\Omega\big( \sqrt{T (\log T)^{d/2}} \big)$, so the two coincide up to the factor of $2$ vs.~$\frac{1}{2}$ in the exponent, as well as the (few) extra $\log T$ factors hidden in the $O^*(\cdot)$ notation.  

While \cite{Sri09} focused on the cumulative regret, one can obtain simple regret bounds by noting that it it always possible to have a simple regret no higher than the normalized cumulative regret \cite{Bub09}; this yields the condition $\frac{T}{B\gamma_T + \gamma_T^2} \ge \frac{C}{\epsilon^2}$ for $\epsilon$-optimality, where $C = O^*(1)$ (see also \cite{Con13,Bog16a}).  By substituting $\gamma_T = O\big( (\log T)^{d+1}\big)$ and rearranging, we find that for the SE kernel this condition is of the form $T \ge \frac{C'}{\epsilon^2} \big(\log\frac{1}{\epsilon}\big)^{2d}$ for some $C' = O^*(1)$, again matching \eqref{eq:inst_se} up to the factor of  $2$ vs.~$\frac{1}{2}$ in the exponent.

For the Mat\'ern kernel, the gaps are more significant.  In particular, substituting $\gamma_T = O^*\big( T^{\frac{d(d+1)}{2\nu + d(d+1)}}\big)$, we find that the cumulative regret upper bound in \eqref{eq:cumul_upper} is only sublinear when $\frac{d(d+1)}{2\nu + d(d+1)} < \frac{1}{2}$, or equivalently $2\nu-d(d+1)>0$.  A similar statement holds for having a non-void condition on the simple regret.  This suggests that these existing upper bounds may be loose, since, at least in the authors' judgment,  sublinear regret should be possible for all $\nu$ and $d$.  This poses an interesting open problem for future work.

In contrast, if the right-hand side of \eqref{eq:cumul_upper} could be improved to $O^*\big( \sqrt{TB\gamma_T} \big)$, then the upper bound would be sublinear for all $d$ and $\nu$.  We conjecture that such a bound holds (as is the case in the Bayesian setting \cite{Sri09}), but we do not currently have a proof.  We proceed by discussing the bounds that would arise if this conjecture were true, as summarized in the middle column of Table \ref{tbl:summary}.

\begin{table}
    \hspace*{-0.6cm}
    \begin{centering}
    \begin{tabular}{|>{\centering}m{3.4cm}|>{\centering}m{3.5cm}|>{\centering}m{3.6cm}|>{\centering}m{3cm}|}
    \hline 
     & {\small Upper bound \\ (\cite{Sri09})} & {\small Conjectured upper bound \\ (see Section \ref{sec:results})} & {\small Lower bound \\ (Theorems \ref{thm:instantaneous}--\ref{thm:cumulative})} \tabularnewline
    \hline 
    \hline 
    {\small SE kernel} \\ {\small Cumulative regret} &
         $O^{*}\Big(\sqrt{T(\log T)^{2d}}\Big)$ 
        & $O^{*}\Big(\sqrt{T(\log T)^{d}}\Big)$ 
        & $\Omega\Big(\sqrt{T(\log T)^{d/2}}\Big)$
    \tabularnewline
    \hline 
    {\small Mat\'ern kernel}  \\ {\small Cumulative regret}
        & $O^{*}\bigg(T^{\frac{1}{2}\cdot\frac{2\nu+3d(d+1)}{2\nu+d(d+1)}}\bigg)$
        & $O^{*}\bigg(T^{\frac{\nu+d(d+1)}{2\nu+d(d+1)}}\bigg)$ 
        & $\Omega\bigg(T^{\frac{\nu+d}{2\nu+d}}\bigg)$ 
    \tabularnewline
    \hline
    {\small SE kernel} \\ {\small Time to simple regret $\epsilon$}
        & $O^{*}\bigg(\frac{1}{\epsilon^{2}}\Big(\log\frac{1}{\epsilon}\Big)^{2d}\bigg)$ 
        & $O^{*}\bigg(\frac{1}{\epsilon^{2}}\Big(\log\frac{1}{\epsilon}\Big)^{d}\bigg)$  
        & $\Omega\bigg(\frac{1}{\epsilon^{2}}\Big(\log\frac{1}{\epsilon}\Big)^{d/2}\bigg)$
    \tabularnewline
    \hline 
    {\small Mat\'ern kernel} \\ {\small Time to simple regret  $\epsilon$}
        & $O^{*}\bigg(\Big(\frac{1}{\epsilon}\Big)^{\frac{2(2\nu+d(d+1))}{2\nu-d(d+1)}}\bigg)$ \\
            (if $2\nu-d(d+1)>0$)  
        & $O^{*}\bigg(\Big(\frac{1}{\epsilon}\Big)^{2+d(d+1)/\nu}\bigg)$
        & $\Omega\bigg(\Big(\frac{1}{\epsilon}\Big)^{2+d/\nu}\bigg)$
    \tabularnewline
    
    \hline 
    
    \end{tabular}
    \par\end{centering}
    
    \protect\protect\caption{Summary of regret bounds for a fixed RKHS norm bound $B>0$ and noise level $\sigma^2>0$. \label{tbl:summary}}
\end{table}

For the squared exponential kernel, the conjecture would partially close the above-mentioned gap on the factor of $2$ in the exponent to $\log T$.  For the Mat\'ern kernel, we would obtain a cumulative regret of $O^*\big(  T^{\frac{\nu + d(d+1)}{2\nu + d(d+1)}} \big)$, hence matching our lower bound up to the replacement of $d$ by $d(d+1)$.  Thus, both bounds would be of the same form, and behave as $T^c$ for some $c$ that tends to one as $d$ grows large.  Analogous observations hold for the simple regret, as summarized in Table \ref{tbl:summary}.

Finally, we discuss the dependence of the bounds on $B$ and $\sigma$, focusing on the squared exponential kernel with cumulative regret for brevity.  Up to the logarithmic term, our bound \eqref{eq:cumul_SE} depends linearly on $\sigma$.  In contrast, the upper bound in \cite{Sri09} contains a multiplicative term $O\big( \sqrt{\frac{1}{\log(1+\sigma^{-2})}} \big)$, which reduces to $O(\sigma)$ when $\sigma$ is large, but tends to zero much more slowly when $\sigma$ is small.  However, it is shown in \cite{Bog16a} that a linear dependence is attainable for the simple regret, and one should expect that such a dependence is similarly attainable for the cumulative regret.  As for the dependence on $B$, the optimal scaling as $B \to \infty$ is unclear.  For fixed $T$, $d$, and $\sigma^2$, our lower bound has dependence $(\log B)^d$, whereas the upper bound of in \eqref{eq:cumul_upper} has dependence $\sqrt{B}$, which is stronger since $d = O(1)$.

The remainder of the paper is devoted to the proofs of Theorems \ref{thm:instantaneous} and \ref{thm:cumulative}.

\section{Construction of a Finite Ensemble of Functions}  \label{sec:ensemble}

We obtain lower bounds on the regret by considering the case that $f$ is uniform on a finite set $\Fc' = \{f_1,\cdots,f_M\}$.  If we can lower bound the regret (of an arbitrary algorithm) averaged over $m \in \{1,\dotsc,M\}$, then it follows that there must exist a particular value of $m$ such that the same lower bound applies to $f_m$.

More specifically, we let $\Fc'$ have the property that any single point $x$ can only be $\epsilon$-optimal for at most one function in this set.  By this property, the optimization problem with simple regret not exceeding $\epsilon$ is equivalent to the correct identification of the index $m \in \{1,\cdots,M\}$.  

Note that for the simple regret, the parameter $\epsilon$ is part of the problem statement, whereas for cumulative regret it is a parameter that we can choose to our liking.  The starting point of the analysis is the same for both regret notions, and throughout, we assume that $\frac{\epsilon}{B}$ is sufficiently small.  This is true by assumption in Theorem \ref{thm:instantaneous}, whereas when we prove Theorem \ref{thm:cumulative}, we will need to check that our choice of $\epsilon$ is consistent with this assumption.

\subsection{Function Class} \label{sec:FUNC_CLASS}

The class that we consider corresponds to a ``needle in haystack'' problem, and is described as follows: Let $g(x)$ be a function on $\RR^d$ with values in $[-2\epsilon,2\epsilon]$, an RKHS norm not exceeding $B$, a peak value of $2\epsilon$ at $x = 0$, and a value strictly less than $\epsilon$ when $\|x\|_{\infty} \ge w$, for some \emph{width} $w > 0$ to be chosen later.  We let each $f_m(x)$ be given by $g(x)$ shifted so that its peak is at a given point, and then cropped to the domain $[0,1]^d$. By forming a $d$-dimensional grid of step size $w$ in each dimension, we can form
\begin{equation}
    M = \bigg\lfloor \bigg( \frac{1}{w} \bigg)^d \bigg\rfloor  \label{eq:M_choice}
\end{equation}
of these while ensuring that any $\epsilon$-optimal point for $f_m$ fails to be $\epsilon$-optimal for any of the other $f_{m'}$.  See Figure \ref{fig:func_class} for an illustration.

It remains to choose $g$, and to determine small we can allow $w$ to be without violating the RKHS norm constraint.  Intuitively, a smaller $w$ gives a ``less smooth'' function and hence a higher RKHS norm, but a smaller amplitude $2\epsilon$ also decreases the RKHS norm in a linear fashion.  Hence, as $\epsilon \to 0$, we can afford to take $w \to 0$.

\begin{figure}
    \begin{centering}
        \includegraphics[width=0.7\columnwidth]{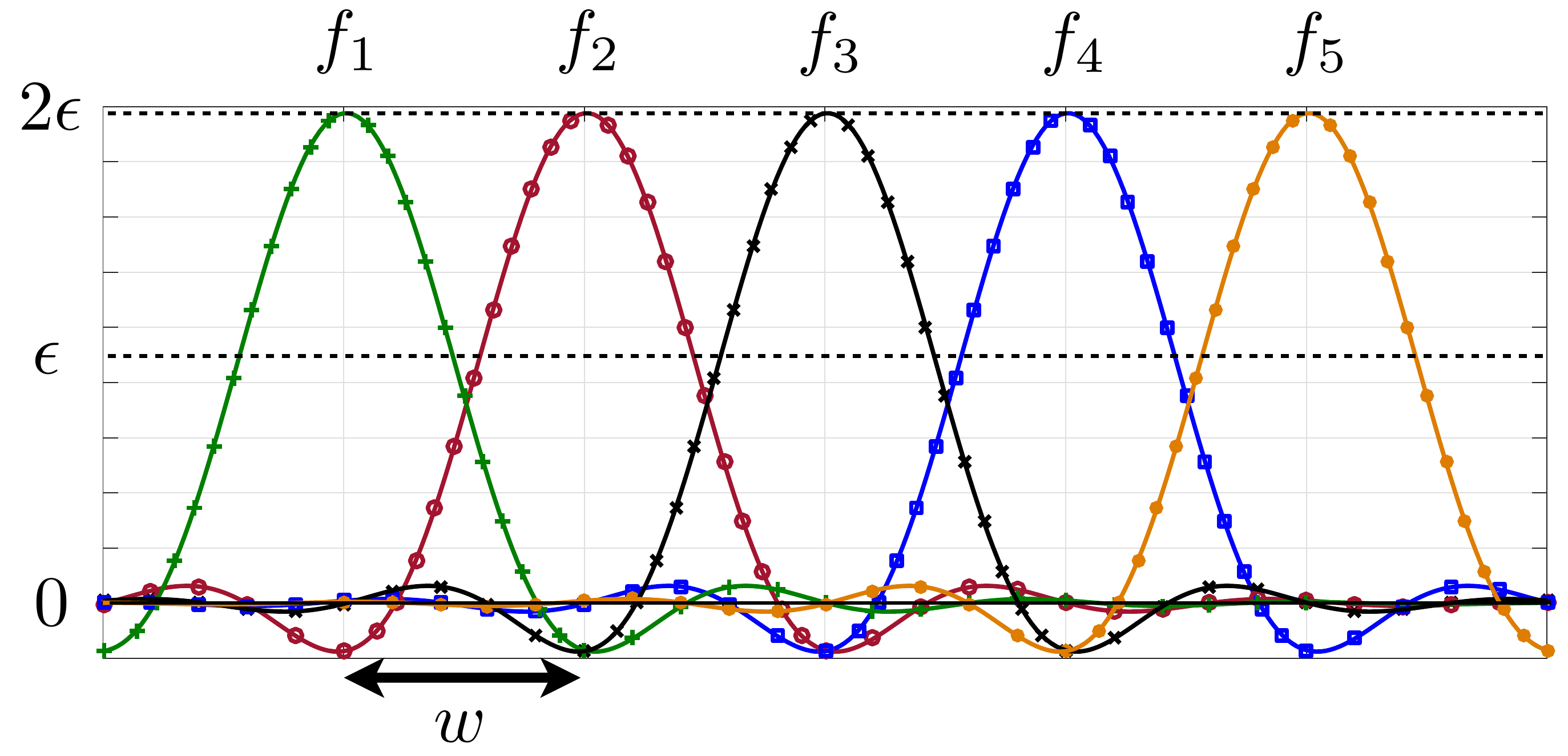}
        \par
    \end{centering}
    
    \caption{Illustration of functions $f_1,\dotsc,f_5$ such that any given point $x \in [0,1]$ can only be $\epsilon$-optimal for one function.  See \eqref{eq:Hf}--\eqref{eq:g_choice} for the specific function being plotted. \label{fig:func_class}}
\end{figure}

Instead of specifying $g$ directly, we specify the Fourier transform $H$ of another function $h$, and then let $g$ be a suitably scaled version of $h$.  Specifically, $H$ is defined to be the multi-dimensional \emph{bump function}:
\begin{equation}
    H(\xi) = 
    \begin{cases}
        \exp\bigg( - \frac{1}{1-\|\xi\|^2} \bigg) & \|\xi\|_2^2 \le 1 \\
        0 & \|\xi\|_2^2 > 1,
    \end{cases} \label{eq:Hf}
\end{equation}
which is  compactly supported and infinitely differentiable.  Let $h(x)$ be the inverse Fourier transform of $H$.  Since $H$ is a fixed function with finite energy, the amplitude of $h(x)$ must vanish as $\|x\|$ grows large; hence, there exists a constant $\zeta$ such that $h(x) < \frac{1}{2} h(0)$ for $\|x\|_{\infty} > \zeta$.  

We choose 
\begin{equation}
    g(x) = \frac{2\epsilon}{h(0)} h\Big( \frac{x\zeta}{w} \Big), \label{eq:g_choice}
\end{equation}
where the parameter $w$ is the same as that in \eqref{eq:M_choice}.  Note that since $H$ is real and symmetric, the maximum of $h$ occurs at zero, and thus the maximum of $g$ is $g(0) = 2\epsilon$, as desired.  We consider $w$ as arbitrary for now, but this will be chosen to ensure that the RKHS norm of $g$ is upper bounded by $B$; see Section \ref{sec:BOUND_RKHS}.  To get an idea of how $g$ behaves, we note that it is precisely the function that was used in producing Figure \ref{fig:func_class} (with $d=1$).

\section{Bounding the RKHS Norm} \label{sec:BOUND_RKHS}

Intuitively, in our function class in Section \ref{sec:FUNC_CLASS}, we would like $M$ to be as large as possible, since this means that there are more functions that need to be distinguished, and a higher lower bound can be obtained.  In this section, we determine how large $M$ can be in our construction while still ensuring that our assumptions are valid, namely, the RKHS norm of each function is upper bounded by $B$, and any given point can only be $\epsilon$-optimal for at most one function. 

\subsection{Properties of RKHS Norms}

The following properties of RKHS norms are well-known; see for example \cite[Sec.~1.5]{Aro50} and \cite[Sec.~3]{Bul11}.  The first property explicitly expresses the RKHS norm on $\RR^d$ in terms of the Fourier transform.

\begin{lemma} \emph{\cite[Sec.~1.5]{Aro50}} \label{lem:norm}
    Consider an RKHS $\Hc$ for functions on $\RR^d$, corresponding to a kernel of the form $k(x,x') = k(\tau)$ with $\tau = x - x'$, and let $K(\xi)$ be the $d$-dimensional Fourier transform of $k(\tau)$.  Then for any $\tilde{f} \in \Hc$ with Fourier transform $\tilde{F}(\xi)$, we have
    \begin{equation}
        \|\tilde{f}\|_{\Hc}^2 = \int \frac{ |\tilde{F}(\xi)|^2 }{ K(\xi) } d\xi.
    \end{equation}
\end{lemma}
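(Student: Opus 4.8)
The plan is to identify the RKHS $\Hc$ by appealing to the fact that a positive-definite kernel determines its RKHS \emph{uniquely} (Moore--Aronszajn): it therefore suffices to exhibit a Hilbert space of functions on $\RR^d$ whose reproducing kernel is $k$ and whose squared norm is the claimed Fourier-domain integral. To this end, fix a unitary-type Fourier convention and let $\Hc'$ be the set of functions $\tilde f$ on $\RR^d$ whose Fourier transform $\tilde F$ satisfies $\int |\tilde F(\xi)|^2/K(\xi)\, d\xi < \infty$, with inner product $\langle \tilde f,\tilde g\rangle_{\Hc'} = \int \tilde F(\xi)\overline{\tilde G(\xi)}/K(\xi)\, d\xi$. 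Here one uses that $K(\xi) > 0$ everywhere: $K \ge 0$ by Bochner's theorem since $k$ is positive definite, and strict positivity holds for both kernels in \eqref{eq:kSE}--\eqref{eq:kMat}, whose spectral densities are a strictly positive Gaussian density and a strictly positive power-law density, respectively.

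First I would check that $\Hc'$ is an honest Hilbert space of continuous functions. The assignment $\tilde f \mapsto \tilde F/\sqrt{K}$ is a linear isometry of $\Hc'$ into $L^2(\RR^d)$, so $\Hc'$ inherits completeness; and by Cauchy--Schwarz, $\|\tilde F\|_{L^1} \le \|\tilde F/\sqrt{K}\|_{L^2}\,\|\sqrt{K}\|_{L^2} = \|\tilde f\|_{\Hc'}\,\sqrt{k(0)} < \infty$, so the inverse transform of $\tilde F$ is a well-defined bounded continuous function, which we take to be the representative $\tilde f$. In particular, point evaluation is a bounded linear functional on $\Hc'$.

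Next I would verify that $k$ reproduces in $\Hc'$. For fixed $x_0$, the section $k(\cdot\,,x_0) = k(\cdot - x_0)$ has Fourier transform $\xi \mapsto e^{-\mathrm{i}\xi\cdot x_0} K(\xi)$, so it lies in $\Hc'$ with $\|k(\cdot - x_0)\|_{\Hc'}^2 = \int |K(\xi)|^2/K(\xi)\, d\xi = \int K(\xi)\, d\xi = k(0) < \infty$, and for any $\tilde f \in \Hc'$,
\begin{equation}
    \langle \tilde f,\, k(\cdot - x_0)\rangle_{\Hc'} \;=\; \int \frac{\tilde F(\xi)\,\overline{e^{-\mathrm{i}\xi\cdot x_0}K(\xi)}}{K(\xi)}\, d\xi \;=\; \int \tilde F(\xi)\, e^{\mathrm{i}\xi\cdot x_0}\, d\xi \;=\; \tilde f(x_0),
\end{equation}
by Fourier inversion, which is legitimate because $\tilde F \in L^1$. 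Thus $k$ is the reproducing kernel of $\Hc'$, and by uniqueness $\Hc' = \Hc$ with equal norms, which is the claim. Equivalently, one can bypass the uniqueness theorem: finite combinations $\sum_j c_j k(\cdot - x_j)$ are dense in $\Hc$ by definition, and on such elements both sides of the claimed formula equal $\int K(\xi)\,\big|\sum_j c_j e^{-\mathrm{i}\xi\cdot x_j}\big|^2\, d\xi$; one then extends to general $\tilde f \in \Hc$ using continuity of the norm together with the isometry above.

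The main obstacle is bookkeeping rather than substance. One must fix a single Fourier-transform normalization and propagate the resulting $2\pi$-factors consistently through the inversion formula, Parseval's identity, and the expression for $\widehat{k(\cdot - x_0)}$; with a non-unitary convention the stated formula must be multiplied by the corresponding constant. The other point requiring care is that elements of $\Hc'$ must be treated as genuine functions, not $L^2$ equivalence classes, which is exactly what the $L^1$ estimate above provides, and this is precisely where positivity of $K$ (no spectral gaps) is used. For the application in this paper none of this is delicate, since the functions $h$ and $g$ of Section~\ref{sec:FUNC_CLASS} have compactly supported, $C^\infty$ Fourier transforms, so the integrals in question are manifestly finite.
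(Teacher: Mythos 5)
Your proof is correct. Note that the paper does not prove this lemma at all---it is quoted as a classical fact from \cite[Sec.~1.5]{Aro50}---so there is no in-paper argument to compare against; what you have written is the standard derivation (construct the candidate space with the Fourier-weighted inner product, check completeness and boundedness of point evaluations via the $L^1$ estimate $\|\tilde F\|_{L^1}\le\|\tilde f\|_{\Hc'}\sqrt{k(0)}$, verify the reproducing property of the kernel sections, and invoke Moore--Aronszajn uniqueness). You also correctly flag the only two points that genuinely require care: strict positivity of the spectral density $K$ (which holds for both \eqref{eq:kSE} and \eqref{eq:kMat}) and consistency of the Fourier normalization, which must match the convention used in the paper's expressions for $K(\xi)$ so that no stray $2\pi$ constants appear in the norm identity. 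The only mild caveat is in your alternative ``bypass'' argument: extending from finite kernel combinations to general $\tilde f\in\Hc$ by continuity requires identifying the abstract limit with the pointwise/Fourier limit, which is most cleanly handled by the uniqueness route you give first; as written, the primary argument is complete and the sketch of the alternative is fine as a sketch.
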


The next result relates the RKHS norm on $\RR^d$ to that on a compact subset $D \subseteq \RR^d$, and reveals that the cropping operation described at the start of Section \ref{sec:FUNC_CLASS} cannot increase the RKHS norm.

\begin{lemma} \emph{\cite[Sec.~1.5]{Aro50}} \label{lem:crop}
    Consider two RKHS $\Hc(D)$ and $\Hc(\RR^d)$ for functions on a compact set $D \subseteq \RR^d$ and $\RR^d$ respectively, corresponding to a kernel of the form $k(x,x') = k(\tau)$ with $\tau = x - x'$.  Then for any $\tilde{f} \in \Hc(D)$, we have
    \begin{equation}
        \|\tilde{f}\|_{\Hc(D)} = \inf_{\tilde{g}} \|\tilde{g}\|_{\Hc(\RR^d)},
    \end{equation}
    where the infimum is over all functions $\tilde{g} \in \Hc(\RR^d)$ that agree with $\tilde{f}$ when restricted to $D$.
\end{lemma}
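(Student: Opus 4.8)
The plan is to realize $\Hc(D)$ as a quotient of $\Hc(\RR^d)$ by the closed subspace of functions vanishing on $D$, and to identify the resulting quotient norm both with the infimum in the statement and with the RKHS norm for the restricted kernel. This is the classical Aronszajn restriction argument, so the work is mostly bookkeeping.

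First I would set $\Hc_0 = \{\tilde g \in \Hc(\RR^d) : \tilde g(x) = 0 \text{ for all } x \in D\}$. Since each point evaluation $\tilde g \mapsto \tilde g(x)$ is a bounded linear functional on $\Hc(\RR^d)$ (represented by $k(\cdot,x)$, using Lemma~\ref{lem:norm}'s ambient RKHS), $\Hc_0$ is a closed subspace, yielding the orthogonal decomposition $\Hc(\RR^d) = \Hc_0 \oplus \Hc_0^{\perp}$. Consider the restriction map $\rho(\tilde g) = \tilde g|_D$; by construction $\ker\rho = \Hc_0$, so $\rho$ restricts to a bijection from $\Hc_0^{\perp}$ onto its range $V := \{\tilde g|_D : \tilde g \in \Hc(\RR^d)\}$. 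Transport the norm of $\Hc(\RR^d)$ to $V$ via this bijection, setting $\|f\|_V := \big\|(\rho|_{\Hc_0^{\perp}})^{-1} f\big\|_{\Hc(\RR^d)}$; since $\Hc_0^{\perp}$ is closed in the complete space $\Hc(\RR^d)$, $(V,\|\cdot\|_V)$ is a Hilbert space. The infimum identity is then immediate from Pythagoras: given any $\tilde g$ with $\tilde g|_D = \tilde f$, write $\tilde g = \tilde g_0 + \tilde g_1$ with $\tilde g_0 \in \Hc_0$, $\tilde g_1 \in \Hc_0^{\perp}$; then $\tilde g_1|_D = \tilde f$ and $\|\tilde g\|_{\Hc(\RR^d)}^2 = \|\tilde g_0\|^2 + \|\tilde g_1\|^2 \ge \|\tilde g_1\|^2 = \|\tilde f\|_V^2$, with equality exactly when $\tilde g_0 = 0$. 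Hence $\|\tilde f\|_V = \inf\{\|\tilde g\|_{\Hc(\RR^d)} : \tilde g|_D = \tilde f\}$, the infimum being attained by the minimal-norm extension $\tilde g_1$.

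It then remains to show $(V,\|\cdot\|_V) = \Hc(D)$. I would check that $V$ is an RKHS on $D$ with reproducing kernel $k$ restricted to $D\times D$, and invoke the uniqueness of the RKHS attached to a given kernel. Boundedness of evaluations on $V$ follows from $|f(x)| = |\tilde g_1(x)| \le \|\tilde g_1\|_{\Hc(\RR^d)}\sqrt{k(x,x)} = \|f\|_V \sqrt{k(x,x)}$ for $x \in D$. For the kernel: fix $x \in D$ and decompose $k(\cdot,x) = k_0^{(x)} + k_1^{(x)}$ with $k_0^{(x)} \in \Hc_0$, $k_1^{(x)} \in \Hc_0^{\perp}$. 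Since $k_0^{(x)}$ vanishes on $D$, we get $k_1^{(x)}(x') = k(x',x)$ for $x' \in D$; and for any $f \in V$ with preimage $\tilde g_1 \in \Hc_0^{\perp}$, orthogonality gives $\langle f, k_1^{(x)}|_D\rangle_V = \langle \tilde g_1, k_1^{(x)}\rangle_{\Hc(\RR^d)} = \langle \tilde g_1, k(\cdot,x)\rangle_{\Hc(\RR^d)} = \tilde g_1(x) = f(x)$. Thus $x' \mapsto k(x',x)$ is the reproducing element of $V$ at each $x \in D$, so $V$ has reproducing kernel $k|_{D\times D}$; by the Moore--Aronszajn uniqueness theorem it coincides isometrically with $\Hc(D)$. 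Combined with the previous paragraph, $\|\tilde f\|_{\Hc(D)} = \|\tilde f\|_V = \inf_{\tilde g}\|\tilde g\|_{\Hc(\RR^d)}$ over extensions $\tilde g$ of $\tilde f$, as claimed.

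The main obstacle is not conceptual but care in the two "identification" steps: confirming that the transported norm really makes $V$ a complete RKHS (completeness is inherited from closedness of $\Hc_0^{\perp}$), and correctly matching $V$ with $\Hc(D)$ via uniqueness — in particular that the restriction space $V$ is \emph{all} of $\Hc(D)$ rather than a proper subspace, which is exactly what the uniqueness theorem delivers once the reproducing kernel computation above is in hand. Note that stationarity of $k$ plays no role here; it is only used elsewhere via Lemma~\ref{lem:norm}.
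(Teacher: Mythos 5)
The paper offers no proof of this lemma, citing \cite[Sec.~1.5]{Aro50} instead; your argument is precisely the classical one from that reference, and it is correct. The three key steps---the orthogonal decomposition against the closed subspace of functions vanishing on $D$, the Pythagorean identification of the transported norm with the extension infimum (attained at the minimal-norm extension), and the verification that the restriction space carries the reproducing kernel $k|_{D\times D}$ so that Moore--Aronszajn uniqueness identifies it with $\Hc(D)$---are all carried out correctly, including the observation that stationarity of $k$ is irrelevant here.
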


\subsection{Squared Exponential Kernel}

We first consider the isotropic squared exponential kernel with lengthscale $l$, defined in \eqref{eq:kSE}.  Writing $k(x,x') = k(\tau)$ with $\tau = x - x'$, the Fourier transform is given by \cite[Sec.~4.2]{Ras06}
\begin{equation}
    K(\xi) = (2\pi l^2)^{d/2} e^{- 2\pi^2l^2 \|\xi\|_2^2}. \label{eq:fourier_se}
\end{equation}
Using Lemma \ref{lem:norm} and \eqref{eq:fourier_se}, and denoting $a_0 = \frac{2\epsilon}{h(0)}$ and $w_0 = \frac{w}{\zeta}$ (\emph{cf.}, \eqref{eq:g_choice}) for brevity, the RKHS norm of $g$ in $\RR^d$ is given by
\begin{align}
    \|g\|_k^2
        &= \frac{1}{(2\pi l^2)^{d/2}} \int |G(\xi)|^2 e^{ 2\pi^2 l^2 \|\xi\|_2^2 } d\xi \label{eq:g_norm_se} \\
        &=  \frac{a_0^2 w_0^{2d} }{(2\pi l^2)^{d/2}} \int_{ \|w_0 \xi\|_2 \le 1} \exp\bigg( 2\pi^2 l^2  \|\xi\|_2^2 - \frac{2}{ 1-\| w_0 \xi\|^2 } \bigg) d\xi \label{eq:g_norm_se2} \\
        &\le \frac{a_0^2 w_0^{2d}}{(2\pi l^2)^{d/2}} \int_{\|\xi\|_2 \le \frac{1}{w_0}} \exp\big( 2\pi^2 l^2  \|\xi\|_2^2 \big) d\xi \\
        &\le \frac{a_0^2 w_0^{2d}}{(2\pi l^2)^{d/2}} V(w_0^{-1}) \exp\Big( \frac{ 2\pi^2 l^2  }{ w_0^2 } \Big) \label{eq:g_norm_se4} \\
        &\le  \frac{a_0^2}{(2\pi l^2)^{d/2}} \exp\Big( \frac{ 2\pi^2 l^2 }{ w_0^2 } \Big), \label{eq:g_norm_se5}
\end{align}
where:
\begin{itemize}
    \item \eqref{eq:g_norm_se2} follows from \eqref{eq:Hf}--\eqref{eq:g_choice} and the fact that the $d$-dimensional Fourier transform of $a h(x/b)$ is $ab^d H(xb)$;
    \item \eqref{eq:g_norm_se4} follows by upper bounding $\|\xi\|_2^2 \le \frac{1}{w_0^2}$ according to the constraint, and defining $V(w_0^{-1})$ to be the volume of a ball of radius $w_0^{-1}$ in $d$ dimensions:
    \begin{equation}
        V(w_0^{-1}) = \frac{\pi^{d/2}}{\Gamma(\frac{d}{2}-1)} (w_0^{-1})^d, \label{eq:volume}
    \end{equation}
    where $\Gamma(\cdot)$ denotes the gamma function;
    \item \eqref{eq:g_norm_se5} holds when $w_0$ is chosen such that $w_0^{2d} V(w_0^{-1}) \le 1$, as will be done shortly.
\end{itemize}  

Equating \eqref{eq:g_norm_se5} with $B^2$ (so that $\|g\|_k \le B$), and recalling the definition $a_0 = \frac{2\epsilon}{h(0)}$, we find that
\begin{align}
    & B (2\pi l^2)^{d/4} \exp\Big( - \frac{ \pi^2 l^2 }{ w_0^2 } \Big) h(0) = 2\epsilon \\
    &\implies w_0 = \frac{\pi l}{ \sqrt{\log\frac{B (2\pi l^2)^{d/4} h(0)}{2\epsilon} } }. \label{eq:w0_1}
\end{align}
From this choice, we see that $w_0 \to 0$ as $\frac{\epsilon}{B} \to 0$.  Hence, and since $w_0^{2d} V(w_0^{-1}) = O(w_0^d)$, we find that the above assumption $w_0^{2d} V(w_0^{-1}) \le 1$ is indeed true as long as $\frac{\epsilon}{B}$ is sufficiently small.  The latter is assumed in Theorem \ref{thm:instantaneous}, and will be ensured when we choose $\epsilon$ to prove Theorem \ref{thm:cumulative}.

Recalling that $h(x) < \frac{1}{2}h(0)$ for $\|x\|_{\infty} > \zeta$, we have from \eqref{eq:g_choice} that $g(x) < \frac{1}{2}g(0) = \epsilon$ for $\|x\|_{\infty} > \zeta w_0 = w$, as was assumed in the construction of the functions in Section \ref{sec:FUNC_CLASS}.  Hence, from \eqref{eq:w0_1}, we can choose $w = \zeta w_0 = \frac{\zeta \pi l}{ \sqrt{\log\frac{B (2\pi l^2)^{d/4} h(0)}{2\epsilon} } }$ in \eqref{eq:M_choice}, yielding
\begin{equation}
    M = \Bigg\lfloor \Bigg( \frac{ \sqrt{\log\frac{B (2\pi l^2)^{d/4} h(0)}{2\epsilon}} }{\zeta \pi l} \Bigg)^d \Bigg\rfloor. \label{eq:M_se}
\end{equation}
In summary, we have shown that with this choice of $M$, we can construct $M$ functions whose RKHS norm is upper bounded by $B$, whose peak values are $2\epsilon$, and such that any $\epsilon$-optimal point for one function cannot be $\epsilon$-optimal for any of the other $M-1$ functions.

\subsection{Mat\'ern Kernel}

We now consider the Mat\'ern  kernel with parameters $l$ and $\nu$, defined in \eqref{eq:kMat}.  The Fourier transform of $k$ (treated as a function of $\tau = x-x'$) is \cite[Sec.~4.2]{Ras06}
\begin{equation}
    K(\xi) = c_1 \bigg( \frac{2\nu}{l^2} + 4\pi^2 \|\xi\|_2^2 \bigg)^{-(\nu + d/2)},
\end{equation}
where $c_1 = \frac{ 2^d \pi^{d/2} \Gamma(\nu + d/2) (2\nu)^{\nu} }{ \Gamma(\nu) l^{2\nu} }$ is a constant (recall that $d$, $l$, and $\nu$ are assumed to be fixed).  Hence, using Lemma \ref{lem:norm} and  again denoting $a_0 = \frac{2\epsilon}{h(0)}$ and $w_0 = \frac{w}{\zeta}$ (\emph{cf.}, \eqref{eq:g_choice}), we have the following, analogously to \eqref{eq:g_norm_se}:
{ \allowdisplaybreaks
\begin{align} 
    \|g\|_k^2
        &= c_1^{-1} \int |G(\xi)|^2 \bigg( \frac{2\nu}{l^2} + 4\pi^2 \|\xi\|_2^2 \bigg)^{(\nu + d/2)} d\xi \label{eq:matern_int1} \\
        &= c_1^{-1} a_0^2 w_0^{2d} \int_{\|w_0 \xi\|_2 \le 1} \exp\bigg( - \frac{2}{ 1-\| w_0 \xi\|_2^2 } \bigg)\bigg( \frac{2\nu}{l^2} + 4\pi^2 \|\xi\|^2 \bigg)^{(\nu + d/2)} d\xi \label{eq:matern_int2} \\
        &\le c_1^{-1} a_0^2 w_0^{2d} \int_{\|\xi\|_2 \le \frac{1}{w_0}}\bigg( \frac{2\nu}{l^2} + 4\pi^2 \|\xi\|_2^2 \bigg)^{(\nu + d/2)} d\xi \label{eq:matern_int3} \\
        &\le c_1^{-1} a_0^2 w_0^{2d} V(w_0^{-1}) \bigg( \frac{2\nu}{l^2} + \frac{4\pi^2}{w_0^2} \bigg)^{(\nu + d/2)}  \label{eq:matern_int4} \\
        &= c_1^{-1} a_0^2 w_0^{d-2\nu} V(w_0^{-1}) \bigg( \frac{2\nu w_0^2}{l^2} + 4\pi^2 \bigg)^{(\nu + d/2)} \label{eq:matern_int5}  \\
        &\le c_2 a_0^2 w_0^{-2\nu} \bigg( \frac{2\nu w_0^2}{l^2} + 4\pi^2 \bigg)^{(\nu + d/2)} \label{eq:matern_int6} \\
        &\le c_2 a_0^2 w_0^{-2\nu} \big(8\pi^2\big)^{(\nu + d/2)}, \label{eq:matern_int7} 
\end{align} }
where:
\begin{itemize}
    \item \eqref{eq:matern_int2} follows from \eqref{eq:Hf}--\eqref{eq:g_choice} and the fact that the $d$-dimensional Fourier transform of $a h(x/b)$ is $ab^d H(xb)$;
    \item \eqref{eq:matern_int3} follows by upper bounding the exponential term by one;
    \item \eqref{eq:matern_int4} follows by upper bounding $\|\xi\|_2^2 \le \frac{1}{w_0^2}$ according to the constraint, and using the definition of the volume in \eqref{eq:volume};
    \item \eqref{eq:matern_int6} holds for some $c_2 > 0$ since $V(w_0^{-1}) = O(w_0^{-d})$ by \eqref{eq:volume};
    \item \eqref{eq:matern_int7} holds when $w_0$ is chosen such that $\frac{2\nu w_0^2}{l^2} \le 4\pi^2$, as will be done shortly.
\end{itemize}

Equating \eqref{eq:matern_int7} with $B^2$ (thus ensuring $\|g\|_k \le B$), and recalling that $a_0 = \frac{2\epsilon}{h(0)}$, we obtain
\begin{align}
    \frac{B c_2^{-1/2} h(0) w_0^{\nu}}{ \big( 8\pi^2 \big)^{(\nu + d/2)/2} } = 2\epsilon. \label{eq:matern_pre}
\end{align}
We again observe that in the limit as $\frac{\epsilon}{B} \to 0$ we have $w_0 \to 0$; hence, to satisfy the above condition $\frac{2\nu w_0^2}{l^2} \le 4\pi^2$, it suffices that $\frac{\epsilon}{B}$ is sufficiently small.

Rearranging \eqref{eq:matern_pre}, we obtain
\begin{align}
    w_0 &= \Bigg( 2\epsilon \frac{ (8\pi^2)^{(\nu + d/2)/2} }{ B c_2^{-1/2} h(0) } \Bigg)^{1/\nu}. \label{eq:matern_post}
\end{align}
Recalling that $w = \zeta w_0$ in \eqref{eq:M_choice} and \eqref{eq:g_choice}, we find that \eqref{eq:matern_post} gives the following analog of \eqref{eq:M_se}:
\begin{equation}
    M = \Big\lfloor \Big( \frac{B c_3}{\epsilon} \Big)^{d/\nu} \Big\rfloor, \label{eq:M_matern}
\end{equation}
where 
\begin{equation}
    c_3 :=  \Big( \frac{1}{\zeta} \Big)^{\nu} \cdot \Bigg( \frac{ c_2^{-1/2} h(0) }{ 2 (8\pi^2)^{(\nu + d/2)/2} } \Bigg).
\end{equation}

\section{Bounding the Regret}

In this section, we combine the tools from Sections \ref{sec:ensemble} and \ref{sec:BOUND_RKHS} to deduce the regret bounds given in Theorems \ref{thm:instantaneous} and \ref{thm:cumulative}.  Throughout the section, we use the fact that for $P_1$ and $P_2$ representing the density functions of Gaussian random variables with means $(\mu_1,\mu_2)$ and variance $\sigma^2$, we have
\begin{equation}
    D(P_1 \| P_2) = \frac{ (\mu_1 - \mu_2)^2 }{ 2\sigma^2 }. \label{eq:Gaussian_div}
\end{equation}  
Here, $D(P_1 \| P_2) = \int_{\RR} P_1(z) \log \frac{P_1(z)}{P_2(z)} dz$ denotes the Kullback-Leibler divergence between two density functions \cite{Cov01}.

\subsection{Auxiliary Lemmas and Definitions}

Fix an arbitrary (e.g, optimal) bandit optimization algorithm.  Let $\yv = (y_1,\dotsc,y_T)$ be the observations up to time $T$, and for $m=1,\dotsc,M$, let $P_m(\yv)$ be the probability density function of $\yv$ upon running the algorithm on a given function $f_m$ indexed by $m$.  Moreover, let $P_0(\yv)$ be the probability density of $\yv$ when the algorithm is run with $f$ being zero everywhere.  
Let $\EE_m$ and $\PP_m$ denote expectations and probabilities when the underlying function is $f = f_m$, and similarly for $\EE_0$ and $\PP_0$.  Finally, let $\EE[\cdot] = \frac{1}{M}\sum_{m=1}^M \EE_m[\cdot]$ be the expectation averaged over a uniformly random function index, and similarly for $\PP[\cdot]$.

The following lemma from \cite{Aue95} relates two expectations $\EE_m$ and $\EE_0$ in terms of the corresponding divergence $D(P_0\|P_m)$.

\begin{lemma} \emph{\cite[p.~27]{Aue95}} \label{lem:auer}
	For any function $a(\yv)$ taking values in a bounded range $[0,A]$, we have
	\begin{equation}
		\EE_m[a(\yv)] \le \EE_0[a(\yv)] + A \sqrt{ D(P_0 \| P_m) }. \label{eq:auer_bound}
	\end{equation}
\end{lemma}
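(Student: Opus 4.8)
The plan is to derive the bound in Lemma~\ref{lem:auer} from Pinsker's inequality together with the bounded-range assumption on $a(\yv)$. First I would write the difference of expectations as $\EE_m[a(\yv)] - \EE_0[a(\yv)] = \int a(\yv) \big( P_m(\yv) - P_0(\yv) \big) d\yv$. Since $a$ takes values in $[0,A]$, I bound this by $A$ times the total variation distance between $P_m$ and $P_0$, i.e.\ $\EE_m[a(\yv)] - \EE_0[a(\yv)] \le A \cdot \dTV(P_0, P_m)$, where $\dTV(P,Q) = \frac{1}{2}\int |P(\yv) - Q(\yv)| d\yv$. (One has to be slightly careful with the factor of $2$ here: $\int a\,(P_m - P_0) \le A \cdot \frac{1}{2}\int |P_m - P_0|$ holds because $\int (P_m - P_0) = 0$, so only the positive part contributes and it equals half the total variation.)

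Next I would invoke Pinsker's inequality in the form $\dTV(P_0, P_m) \le \sqrt{\tfrac{1}{2} D(P_0 \| P_m)}$. Combining the two displays gives $\EE_m[a(\yv)] \le \EE_0[a(\yv)] + A \sqrt{\tfrac{1}{2} D(P_0\|P_m)}$, which is even slightly stronger than \eqref{eq:auer_bound} (the claimed bound drops the $\tfrac{1}{2}$, so it follows a fortiori). Alternatively, if one prefers to follow \cite{Aue95} directly, the argument there splits the integral over the region where $P_0 \ge P_m$ and its complement and applies Cauchy--Schwarz, but the Pinsker route is cleaner. I would state the result with the $\sqrt{D(P_0\|P_m)}$ form as written for consistency with the cited reference.

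There is no real obstacle here — this is a standard information-theoretic inequality. The only point requiring a little care is the handling of the constant: making sure the total-variation-to-KL passage is applied correctly and that the final constant is no larger than what is claimed. Since the statement as written is weaker (by a $\sqrt{2}$ factor) than what Pinsker actually yields, any minor slack is harmless, so I would simply present the clean two-line derivation and remark that the stated form suffices for all subsequent applications.
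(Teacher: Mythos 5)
Your proposal is correct and is essentially the same proof as in the paper: both split off the region where $P_m \ge P_0$ to bound the difference of expectations by $A$ times the total variation distance, and then apply Pinsker's inequality (the paper likewise absorbs the resulting $\sqrt{1/2}$ factor into the weaker stated constant). The handling of the factor of two in passing from the signed integral to the total variation is exactly right.
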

\noindent To make the paper self-contained, the proof is given in the appendix.

Before proceeding, we introduce some additional notation:
\begin{itemize}
	\item Let $\{\Rc_m\}_{m=1}^M$ be the partition of the domain into $M$ regions according to \eqref{eq:M_choice}, i.e., done according to a uniform grid with $f_m$ taking the maximum in the centre of $\Rc_m$;
	\item Let $j_t$ be the index at time $t$ such that $x_t$ falls into $\Rc_{j_t}$ -- this can be thought of as a quantization of $x_t$;
	\item Let $N_j = \sum_{t=1}^T \openone\{j_t = j\}$ denote the number of points from $\Rc_j$ that are selected throughout the $T$ rounds;
	\item Let $P_m(y_t|\yv_{t-1})$ denote the distribution of $y_t$ given all the observations up to time $t-1$, denoted by $\yv_{t-1} = (y_1,\dotsc,y_{t-1})$ (and $\yv_0 = \emptyset$), in the case that $f = f_m$;
	\item Define the maximum function value within a single region $\Rc_j$ as
	\begin{equation}
		\vbar_m^j := \max_{x \in \Rc_j} f_m(x), \label{eq:vbar}
	\end{equation}
	and the maximum divergence within the region as
	\begin{equation}
		\Dbar_m^j := \max_{x \in \Rc_j} D( P_0(\cdot|x) \| P_m(\cdot|x) ), \label{eq:Dbar}
	\end{equation}
	where $P_m(y|x)$ is the distribution of an observation $y$ for a given selected point $x$ under the function $f_m$, and similarly for $P_0(y|x)$.
\end{itemize}

\noindent The following lemma provides a simple upper bound on the divergence in \eqref{eq:auer_bound}. 

\begin{lemma}
    Under the preceding definitions, we have
    \begin{equation}
        D(P_0 \| P_m) \le \sum_{j=1}^M \EE_0[N_j]\Dbar_m^j \label{eq:div_bound}
    \end{equation}
\end{lemma}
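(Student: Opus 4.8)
The plan is to invoke the chain rule for relative entropy and exploit the fact that the bandit algorithm acts only on the observation sequence $\yv$, so that the conditional law of the next query point, given the past observations, is identical under $P_0$ and under $P_m$.

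First I would factor $P_0(\yv) = \prod_{t=1}^T P_0(y_t \mid \yv_{t-1})$ and $P_m(\yv) = \prod_{t=1}^T P_m(y_t \mid \yv_{t-1})$ and apply the chain rule,
\begin{equation*}
    D(P_0 \| P_m) = \sum_{t=1}^T \EE_0\big[ D\big(P_0(\cdot \mid \yv_{t-1}) \,\big\|\, P_m(\cdot \mid \yv_{t-1})\big) \big],
\end{equation*}
where the outer expectation is over $\yv_{t-1} \sim P_0$. Fixing $t$ and conditioning on $\yv_{t-1}$, the algorithm selects $x_t$ as a (possibly randomized) function of $\yv_{t-1}$ alone, so its conditional law $Q_t(\cdot\mid\yv_{t-1})$ does not depend on whether the true function is $0$ or $f_m$; moreover, since the noise is independent across rounds, $y_t$ given $x_t$ has law $P_0(\cdot\mid x_t)$ under $P_0$ and $P_m(\cdot\mid x_t)$ under $P_m$, irrespective of the past. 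Hence $P_0(\cdot\mid\yv_{t-1})$ and $P_m(\cdot\mid\yv_{t-1})$ are the $Q_t(\cdot\mid\yv_{t-1})$-mixtures of $\{P_0(\cdot\mid x)\}$ and $\{P_m(\cdot\mid x)\}$ respectively, and joint convexity of the KL divergence gives
\begin{equation*}
    D\big(P_0(\cdot\mid\yv_{t-1}) \,\big\|\, P_m(\cdot\mid\yv_{t-1})\big) \le \int Q_t(dx\mid\yv_{t-1})\, D\big(P_0(\cdot\mid x) \,\big\|\, P_m(\cdot\mid x)\big) \le \EE\big[ \Dbar_m^{j_t} \,\big|\, \yv_{t-1} \big],
\end{equation*}
where the last inequality uses the definition \eqref{eq:Dbar}, bounding the divergence at the chosen point by its maximum over the region $\Rc_{j_t}$ that contains $x_t$. (An equivalent route is to apply the chain rule on the joint space of $(\xv,\yv)$, observe that the terms involving $x_t$ vanish because the algorithm induces the same conditional law under $P_0$ and $P_m$, and then drop the $\xv$-coordinates using the data-processing inequality.) Combining the two displays, summing over $t$, and using linearity of expectation together with $N_j = \sum_{t=1}^T \openone\{j_t = j\}$,
\begin{equation*}
    D(P_0 \| P_m) \le \sum_{t=1}^T \EE_0\big[\Dbar_m^{j_t}\big] = \sum_{j=1}^M \Dbar_m^j \sum_{t=1}^T \PP_0(j_t = j) = \sum_{j=1}^M \EE_0[N_j]\, \Dbar_m^j,
\end{equation*}
which is the claimed bound.

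I expect the main point requiring care to be the convexity/data-processing step for a possibly randomized algorithm, together with the accompanying verification that conditioning on $\yv_{t-1}$ genuinely equalizes the law of $x_t$ across $P_0$ and $P_m$ — this is precisely the formalization of the statement that the algorithm only sees $\yv$, not $f$. The Gaussian divergence identity \eqref{eq:Gaussian_div} is used here only implicitly, through $\Dbar_m^j$; it enters explicitly only when this lemma is later combined with the RKHS-norm computations from Section \ref{sec:BOUND_RKHS}.
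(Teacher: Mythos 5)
Your proof is correct and follows essentially the same route as the paper's: the chain rule for KL divergence, bounding the per-round conditional divergence by $\Dbar_m^{j_t}$ via the definition in \eqref{eq:Dbar}, and regrouping the sum over $t$ by region index to obtain $\sum_j \EE_0[N_j]\Dbar_m^j$. Your additional use of joint convexity to cover randomized selection rules is a careful (and welcome) refinement of the step the paper states more briefly, namely that the conditional law of $y_t$ given $\yv_{t-1}$ depends on the past only through $x_t$.
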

\begin{proof}
    We use the chain rule for divergence \cite[Sec.~2.5]{Cov01} to write 
    {\allowdisplaybreaks \begin{align}
    	D(P_0 \| P_m) 
    		&= \sum_{t=1}^T \EE_{0} \Big[ D\big( P_0(\cdot|\yv_{t-1}) \| P_m(\cdot|\yv_{t-1}) \big) \Big]  \label{eq:div_bound_1} \\
    		&\le \sum_{t=1}^T \sum_{j=1}^M \PP_0[ j_t = j ] \Dbar_m^j \label{eq:div_bound_2} \\
    		&= \sum_{j=1}^M \bigg(\sum_{t=1}^T \PP_0[ j_t = j ] \bigg) \Dbar_m^j \\
    		&= \sum_{j=1}^M \EE_0[N_j]\Dbar_m^j, \label{eq:div_bound_3}
    \end{align} }
    where \eqref{eq:div_bound_2} follows by noting that the argument to the expectation in \eqref{eq:div_bound_1} depends on $\yv_{t-1}$ only through the resulting selected point $x_t$, which is chosen based on $\yv_{t-1}$.  When $x_t$ falls into $\Rc_j$ (i.e., $j_t = j$), the divergence is upper bounded by $\Dbar_m^j$ due to \eqref{eq:Dbar}.
\end{proof}

Finally, the following technical lemma will also be key to the analysis.  It roughly states that if we sum the function values $f_m(x)$ over $x$ lying on the grid defined by $\{\Rc_j\}$, then the total is dominated by the largest value, i.e., it behaves as $O(\epsilon)$.

\begin{lemma} \label{lem:vbar_sums}
	The functions $\{f_m\}$ constructed in Section \ref{sec:FUNC_CLASS} are such that $\vbar_m^j$ satisfy the following:
	\begin{enumerate}
		\item $\sum_{j=1}^M \vbar_m^j = O(\epsilon)$ for all $m$;
		\item $\sum_{m=1}^M \vbar_m^j = O(\epsilon)$ for all $j$;
		\item $\sum_{m=1}^M (\vbar_m^j)^2 = O(\epsilon^2)$ for all $j$.
	\end{enumerate}
\end{lemma}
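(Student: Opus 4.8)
The plan is to exploit the fact that each $f_m$ is a shifted-and-cropped copy of a single function $g$ whose value decays as one moves away from its peak, together with the geometry of the uniform grid $\{\Rc_j\}$ of step size $w$. The key observation is that $\vbar_m^j = \max_{x\in\Rc_j} f_m(x)$ is controlled by the value of $g$ at a displacement roughly proportional to the grid distance between cell $\Rc_j$ and the cell containing the peak of $f_m$. So the three sums amount to summing a rapidly decaying function over a $d$-dimensional lattice, which is dominated by the single peak term of size $O(\epsilon)$.

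First I would make precise the decay of $g$. Recall $g(x) = \frac{2\epsilon}{h(0)} h\big(\frac{x\zeta}{w}\big)$, where $h$ is the inverse Fourier transform of the bump function $H$ in \eqref{eq:Hf}. Since $H$ is compactly supported and $C^\infty$, $h$ is a Schwartz function, so for every integer $p$ there is a constant $C_p$ with $|h(x)| \le C_p (1+\|x\|)^{-p}$; equivalently $|g(x)| \le \frac{2\epsilon C_p}{h(0)} \big(1 + \frac{\zeta\|x\|}{w}\big)^{-p}$. Now fix $m$, and let $x_m^*$ be the peak location of $f_m$, which by construction sits at the centre of some cell $\Rc_{j(m)}$. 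For any other cell $\Rc_j$, every point $x \in \Rc_j$ satisfies $\|x - x_m^*\|_\infty \ge (\|j - j(m)\|_\infty - 1)\, w$ when we identify $j$ with its integer grid coordinates; hence $\vbar_m^j \le \frac{2\epsilon C_p}{h(0)} \big(1 + \zeta(\|j-j(m)\|_\infty - 1)\big)^{-p}$ for $\|j - j(m)\|_\infty \ge 1$, while for $j = j(m)$ we simply use $\vbar_m^j \le 2\epsilon$.

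Given this bound, part (1) follows by summing over $j$: choosing $p = d+1$,
\begin{equation}
    \sum_{j=1}^M \vbar_m^j \le 2\epsilon + \frac{2\epsilon C_{d+1}}{h(0)} \sum_{\kv \in \ZZ^d \setminus \{0\}} \big(1 + \zeta(\|\kv\|_\infty - 1)\big)^{-(d+1)},
\end{equation}
and the latter sum converges (the number of $\kv$ with $\|\kv\|_\infty = r$ is $O(r^{d-1})$, so the summand times the multiplicity is $O(r^{-2})$), giving a bound that is $O(\epsilon)$ with an implied constant depending only on $d$, $\zeta$, $h(0)$. For part (2), fix $j$ and sum over $m$; since the peak cells $j(m)$ are distinct grid points (one per function), the same lattice sum appears with the roles of $j$ and $j(m)$ swapped, and the same convergence argument gives $\sum_m \vbar_m^j = O(\epsilon)$. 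For part (3), I would use $p = \frac{d+1}{2}$ rounded up, or more cleanly just note $(\vbar_m^j)^2 \le \frac{4\epsilon^2 C_{d+1}^2}{h(0)^2}\big(1+\zeta(\|j-j(m)\|_\infty-1)\big)^{-(d+1)}$ plus the peak term $(2\epsilon)^2$, and sum over $m$ using the convergent lattice sum again to conclude $\sum_m (\vbar_m^j)^2 = O(\epsilon^2)$.

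The main obstacle is verifying the Schwartz/polynomial-decay claim for $h$ carefully and tracking that the displacement-to-grid-distance relation is correct at the cropping boundary — in particular making sure that "cropped to $[0,1]^d$" does not create a larger maximum in some cell than the uncropped $g$ would (it cannot, since cropping only removes values) and that the grid indexing lines up so that $\|x - x_m^*\|_\infty \ge (\|j - j(m)\|_\infty - 1)w$ genuinely holds for all $x \in \Rc_j$. Once those geometric bookkeeping points are pinned down, the three estimates are immediate consequences of the convergence of $\sum_{r \ge 1} r^{d-1}(1+\zeta(r-1))^{-(d+1)}$.
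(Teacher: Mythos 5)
Your proposal is correct and follows essentially the same route as the paper's proof: both exploit the super-polynomial decay of $h$ (the inverse Fourier transform of the smooth, compactly supported bump $H$), reduce each of the three sums to a lattice sum of a rapidly decaying function dominated by the peak term, and handle the peak cell separately from the convergent tail. The only cosmetic differences are your use of $\ell_\infty$ shells with decay exponent $d+1$ versus the paper's $\ell_2$-based bound with exponent $2d$, and your explicit treatment of the cropping and grid-offset bookkeeping, which the paper handles implicitly by enlarging the sum to all of $\ZZ^d$.
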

\begin{proof}
    The three claims in the lemma statement are proved in a nearly identical manner, so we focus on the first.  Recall from \eqref{eq:M_choice} that each $f_m$ is a shifted version of $g(x) = a_0 h\big(\frac{x}{\zeta w}\big)$, where the shifts among the $m$ values differ by integer multiples of $w$ in one or more of the $d$ dimensions.  That is, the amount by which $g$ is shifted to produce $f_m$ is given by $w \iv_m$ for some $\iv_m = (\iv_{m,1},\dotsc,\iv_{m,d}) \in \ZZ^d$, and the vectors $\iv_{m}$ and $\iv_{m'}$ differ in at least one coordinate when $m \ne m'$.  
    
    In the following, we let $\bone$ denote the vector of ones, and we write $\iv \preceq \iv'$ for element-wise inequalities. Recalling the definitions of $\Rc_j$ and $\vbar_m^j$ following Lemma \ref{lem:auer}, we have
    \begin{align}
        \sum_{j=1}^M \vbar_m^j 
            &= \sum_{j=1}^M \max_{x \in \Rc_j} f_m(x) \\
            &\le \sum_{\iv \in \ZZ^m} \max_{\iv w \preceq x \preceq (\iv+\bone)w} g(x) \label{eq:sum_vbar2} \\
            &= \frac{2\epsilon}{h(0)} \sum_{\iv \in \ZZ^m} \max_{\iv \preceq x' \preceq \iv+\bone} h(\zeta x'), \label{eq:sum_vbar3}
    \end{align}
    where \eqref{eq:sum_vbar2} follows by expanding the sum over $\iv \in \{\iv_m \,:\,m = 1,\dotsc,M\}$ to all $\iv \in \ZZ^d$ and recalling that each $f_m$ is a shifted version of $g$, and \eqref{eq:sum_vbar3} follows by substituting \eqref{eq:g_choice} and applying the change of variable $x' = \frac{x}{w}$.
    
    Since $\frac{2\epsilon}{h(0)} = O(\epsilon)$, it only remains to show that the summation in \eqref{eq:sum_vbar3} is $O(1)$.  To do this, we note that since the bump function $H(\xi)$ in \eqref{eq:Hf} is infinitely differentiable, its inverse Fourier transform $h(x)$ decays to zero as $\|x\|_2 \to \infty$, at a rate faster than any finite power of $\frac{1}{\|x\|_2}$ \cite{Mea73}.  The $2d$-th power suffices for our purposes: There exist constants $C_1,C_2$ such that $h(\zeta x') \le \frac{C_1}{\zeta^2 \|x'\|_2^{2d}}$ whenever $\|x'\|_2^2 \ge C_2$.  Hence, we bound the sum in \eqref{eq:sum_vbar3} by considering two separate cases:
    \begin{itemize}
        \item If the vector $\iv$ is such that $\|x'\|_2^2 < C_2$ for some $\iv \preceq x' \preceq (\iv+\bone)$, we upper bound $h(\zeta x')$ by its maximum value $h(0)$.  This case can only occur for a finite number of $\iv$, and hence the total contribution from such $\iv$ is $O(1)$ (recall that we assume $d = O(1)$).
        \item For the vectors $\iv$ is such that $\|x'\|_2^2 \ge C_2$ for all $\iv \preceq x' \preceq (\iv+\bone)$, we apply the upper bound $h(\zeta x') \le \frac{C_1}{\zeta^2 \|x'\|_2^{2d}}$.  The total contribution is again finite, since the sum of $\frac{1}{\|\iv\|_2^{2d}}$ over all $\iv \ne \bzero$ is finite.  For instance, this can be seen by upper bounding $\frac{1}{\|\iv\|_2^{2d}} \le \prod_{l=1}^d \frac{1}{i_l^2}$ (since $\|\iv\|_2^2 \ge i_l^2$), which implies that $\sum_{\iv \succeq \bone} \frac{1}{\|\iv\|_2^{2d}} \le \big(\sum_{i=1}^{\infty} \frac{1}{i^2}\big)^d < \infty$.
    \end{itemize}
    Combining the above, we conclude that $\sum_{j=1}^M \vbar_m^j = O(\epsilon)$.
\end{proof}

\subsection{Completion of the Proof of Theorem \ref{thm:instantaneous}} \label{sec:simple}


{\bf Applying Lemma \ref{lem:auer}:} Letting $v^{(T)} = f(x^{(T)})$ be the reward of the recommended point, we have
\begin{align}
	\EE_{m}[v^{(T)}] 
		&= \EE_m[ f(x^{(T)}) ] \\
		&\le \sum_{j=1}^M \PP_m[x^{(T)} \in \Rc_j] \vbar_m^j \label{eq:VT_bound_3} \\
		&\le \sum_{j=1}^M \vbar_m^j \Bigg( \PP_0[ x^{(T)} \in \Rc_j ] + \sqrt{ \sum_{j'=1}^M \EE_0[N_{j'}]\Dbar_m^{j'} } \Bigg) \label{eq:VT_bound_5}
\end{align}
where \eqref{eq:VT_bound_3} follows by the same argument as \eqref{eq:div_bound_2}, and \eqref{eq:VT_bound_5} follows from Lemma \ref{lem:auer} and \eqref{eq:div_bound}, with the former setting $a(\yv) = \openone\{ x^{(T)} \in \Rc_j \}$ and using the fact that $\openone\{\cdot\} \in [0,1]$ (note that $x^{(T)}$ is a function of $\yv$).  Averaging over $m$ gives the following bound on the average reward:
\begin{equation}
	\EE[v^{(T)}] \le \frac{1}{M}\sum_{m=1}^M \sum_{j=1}^M \vbar_m^j \Bigg( \PP_0[ x^{(T)} \in \Rc_j ] + \sqrt{ \sum_{j'=1}^M \EE_0[N_{j'}]\Dbar_m^{j'} } \Bigg). \label{eq:avg_VT}
\end{equation}

\textbf{Bounding the two terms in \eqref{eq:avg_VT}:} Using the second part of Lemma \ref{lem:vbar_sums}, we can bound the first term in \eqref{eq:avg_VT} as follows:
\begin{align}
	\frac{1}{M}\sum_{m=1}^M \sum_{j=1}^M \vbar_m^j \PP_0[ x^{(T)} \in \Rc_j ] 
    	&= \frac{1}{M}\sum_{j=1}^M \bigg(\sum_{m=1}^M  \vbar_m^j\bigg) \PP_0[ x^{(T)} \in \Rc_j ]  \\
    	& = O\bigg(\frac{\epsilon}{M}\bigg) \sum_{j=1}^M \PP_0[ x^{(T)} \in \Rc_j ] \\
    	& = O\bigg(\frac{\epsilon}{M}\bigg), \label{eq:first_term_3}
\end{align}
since $\sum_{j=1}^M \PP_0[ x^{(T)} \in \Rc_j ] = 1$.

The second term in \eqref{eq:avg_VT} is bounded as follows: {\allowdisplaybreaks
\begin{align}
	& \frac{1}{M}\sum_{m=1}^M \sum_{j=1}^M \vbar_m^j \sqrt{ \sum_{j'=1}^M \EE_0[N_{j'}]\Dbar_m^{j'} } \nonumber \\
	& \qquad =  \frac{1}{\sqrt{2} \cdot \sigma}\cdot\frac{1}{M}\sum_{m=1}^M \bigg(\sum_{j=1}^M \vbar_m^j\bigg) \sqrt{ \sum_{j'=1}^M \EE_0[N_{j'}](\vbar_m^{j'})^2 } \label{eq:second_term_2} \\
	& \qquad= O\bigg(\frac{\epsilon}{\sigma}\bigg)\cdot\frac{1}{M}\sum_{m=1}^M \sqrt{ \sum_{j'=1}^M \EE_0[N_{j'}](\vbar_m^{j'})^2 } \label{eq:second_term_3} \\
	&\qquad\le O\bigg(\frac{\epsilon}{\sigma}\bigg)\cdot\sqrt{ \frac{1}{M}\sum_{m=1}^M \sum_{j'=1}^M \EE_0[N_{j'}](\vbar_m^{j'})^2 } \label{eq:second_term_4} \\
	&\qquad =O\bigg(\frac{\epsilon}{\sigma}\bigg)\cdot\sqrt{ \frac{1}{M} \sum_{j'=1}^M \EE_0[N_{j'}]\bigg( \sum_{m=1}^M (\vbar_m^{j'})^2\bigg) } \label{eq:second_term_5} \\
	&\qquad =O\bigg(\frac{\epsilon^2}{\sqrt{M} \sigma}\bigg)\cdot\sqrt{ \sum_{j'=1}^M \EE_0[N_{j'}] } \label{eq:second_term_6} \\
	&\qquad = O\bigg(\frac{\sqrt{T}\epsilon^2}{\sqrt{M} \sigma}\bigg), \label{eq:second_term_7}
\end{align} }
where \eqref{eq:second_term_2} follows since the divergence associated with a point $x$ having value $v(x)$ is $\frac{v(x)^2}{2\sigma^2}$ (\emph{cf.}, \eqref{eq:Gaussian_div}), \eqref{eq:second_term_3} follows from the first part of Lemma \ref{lem:vbar_sums}, \eqref{eq:second_term_4} follows from Jensen's inequality, \eqref{eq:second_term_6} follows from the third part of Lemma \ref{lem:vbar_sums}, and \eqref{eq:second_term_7} follows from $\sum_{j'} N_{j'} = T$.

{\bf Combining and Simplifying}: Substituting \eqref{eq:first_term_3} and \eqref{eq:second_term_7} into \eqref{eq:avg_VT} gives
\begin{equation}
	\EE[v^{(T)}] \le C\cdot \epsilon \bigg(\frac{1}{M} + \frac{\epsilon}{
	\sigma}\sqrt{\frac{T}{M}}\bigg)
\end{equation}
for some constant $C$.  Since the maximum function value is $f(x^*) = 2\epsilon$ by the construction in Section \ref{sec:FUNC_CLASS}, this implies that the simple regret is lower bounded by
\begin{align}
\EE[r^{(T)}] 
	&\ge 2\epsilon -  C\cdot \epsilon \bigg(\frac{1}{M} + \frac{\epsilon}{\sigma}\sqrt{\frac{T}{M}}\bigg) \\
	&= \epsilon\bigg( 2 - \frac{C}{M} - \frac{C\epsilon}{\sigma}\sqrt{\frac{T}{M}} \bigg). 
\end{align}
Noting that $M \to \infty$ as $\frac{\epsilon}{B} \to 0$ in both \eqref{eq:M_se} and \eqref{eq:M_matern}, we have for sufficiently small $\frac{\epsilon}{B}$ that $\frac{C}{M} \le \frac{1}{2}$, and hence 
\begin{equation}
    \EE[r^{(T)}] \ge \epsilon\bigg( \frac{3}{2} - \frac{C\epsilon}{\sigma}\sqrt{\frac{T}{M}} \bigg). \label{eq:RT_LB}
\end{equation}
By equating the bracketed term with one, it follows that if the time horizon satisfies
\begin{equation}
	T \le \frac{M \sigma^2}{4C^2 \epsilon^2}, \label{eq:T_final}
\end{equation}
then the average simple regret at time $T$ is at least $\epsilon$.  

Theorem \ref{thm:instantaneous} now follows by substituting $M = \Theta\big( \big(\log\frac{B}{\epsilon}\big)^{d/2} \big)$ into \eqref{eq:T_final} for the squared exponential kernel (\emph{cf.}, \eqref{eq:M_se}), and $M = \Theta\big(\big( \frac{B}{\epsilon} \big)^{d/\nu}\big)$for the Mat\'ern kernel (\emph{cf.}, \eqref{eq:M_matern}).

\subsection{Completion of the Proof of Theorem \ref{thm:cumulative}}

We initially follow similar steps to those used above for the simple regret, but the later steps become slightly more involved. 

{\bf Applying Lemma \ref{lem:auer}:} Letting $V_T = \sum_{t=1}^T f(x_t)$ be the total cumulative reward, we have
\begin{align}
	\EE_{m}[V_T] 
		&= \sum_{t=1}^T \EE_m[ f(x_t) ] \\
		&\le \sum_{t=1}^T \sum_{j=1}^M \PP_m[j_t = j] \vbar_m^j \label{eq:VT_bound_3c} \\
		&= \sum_{j=1}^M \vbar_m^j \EE_m[N_j] \\
		&\le \sum_{j=1}^M \vbar_m^j \Bigg( \EE_0[N_j] + T \sqrt{ \sum_{j'=1}^M \EE_0[N_{j'}]\Dbar_m^{j'} } \Bigg), \label{eq:VT_bound_5c}
\end{align}
where \eqref{eq:VT_bound_3c} follows by the same argument as \eqref{eq:div_bound_2}, and \eqref{eq:VT_bound_5c} follows from Lemma \ref{lem:auer} and \eqref{eq:div_bound}, with the former setting $a(\yv) = N_j$ and using the fact that $N_j \in [0,T]$ (note that $\{x_t\}_{t=1}^T$ is a function of $\yv$).  Averaging over $m$ gives the following bound on the average total reward:
\begin{equation}
	\EE[V_T] \le \frac{1}{M}\sum_{m=1}^M \sum_{j=1}^M \vbar_m^j \Bigg( \EE_0[N_j] + T \sqrt{ \sum_{j'=1}^M \EE_0[N_{j'}]\Dbar_m^{j'} } \Bigg). \label{eq:avg_VTc}
\end{equation}

\textbf{Bounding the two terms in \eqref{eq:avg_VTc}:}  Using the second part of Lemma \ref{lem:vbar_sums}, we can bound the first term in \eqref{eq:avg_VTc} as follows:
\begin{align}
	\frac{1}{M}\sum_{m=1}^M \sum_{j=1}^M \vbar_m^j \EE_0[N_j] &= \frac{1}{M}\sum_{j=1}^M \bigg(\sum_{m=1}^M  \vbar_m^j\bigg) \EE_0[N_j] \\
	& = O\bigg(\frac{\epsilon}{M}\bigg) \sum_{j=1}^M \EE_0[N_j] \\
	& = O\bigg(\frac{T\epsilon}{M}\bigg), \label{eq:first_term_3c}
\end{align}
since $\sum_{j=1}^M N_j = T$.  Moreover, the second term in \eqref{eq:avg_VTc} was already bounded in \eqref{eq:second_term_2}--\eqref{eq:second_term_7}:
\begin{equation}
    T\cdot\frac{1}{M}\sum_{m=1}^M \sum_{j=1}^M \vbar_m^j \sqrt{ \sum_{j'=1}^M \EE_0[N_{j'}]\Dbar_m^{j'} }= O\bigg(\frac{T\sqrt{T}\epsilon^2}{\sqrt{M} \sigma}\bigg), \label{eq:second_term_7c}
\end{equation}

\textbf{Combining and simplifying:} Substituting \eqref{eq:first_term_3c} and \eqref{eq:second_term_7c} into \eqref{eq:avg_VTc} gives
\begin{equation}
	\EE[V_T] \le C'\cdot T\epsilon \bigg(\frac{1}{M} + \frac{\epsilon}{
	\sigma}\sqrt{\frac{T}{M}}\bigg)
\end{equation}
for some constant $C'$.  Since the maximum function value is $f(x^*) = 2\epsilon$ by the construction in Section \ref{sec:FUNC_CLASS}, this implies that the cumulative regret is lower bounded by
\begin{align}
\EE[R_T] 
	&\ge 2T\epsilon -  C'\cdot T\epsilon \bigg(\frac{1}{M} + \frac{\epsilon}{\sigma}\sqrt{\frac{T}{M}}\bigg) \\
	&= T\epsilon\bigg( 2 - \frac{C'}{M} - \frac{C'\epsilon}{\sigma}\sqrt{\frac{T}{M}} \bigg). 
\end{align}
Noting that $M \to \infty$ as $\frac{\epsilon}{B} \to 0$ in both \eqref{eq:M_se} and \eqref{eq:M_matern}, we have for sufficiently small $\frac{\epsilon}{B}$ that $\frac{C'}{M} \le \frac{1}{2}$, and hence 
\begin{equation}
    \EE[R_T] \ge T\epsilon\bigg( \frac{3}{2} - \frac{C'\epsilon}{\sigma}\sqrt{\frac{T}{M}} \bigg). \label{eq:RT_LBc}
\end{equation}
By equating the bracketed term with one, it follows that if the time horizon satisfies
\begin{equation}
	T \le \frac{M \sigma^2}{4(C')^2 \epsilon^2}, \label{eq:T_final_c}
\end{equation}
then the average cumulative regret at time $T$ is at least $T\epsilon$.  For convenience, we also note the following equivalent form of \eqref{eq:T_final_c}:
\begin{equation}
	\epsilon \le \sqrt{\frac{M \sigma^2}{4(C')^2 T}}, \label{eq:T_final2c}
\end{equation}

While $\epsilon$ represented the target simple regret in Section \ref{sec:simple}, for cumulative regret it can be treated as a parameter that we can set to our liking, subject to the fact that we assumed in the analysis that $\frac{\epsilon}{B}$ is sufficiently small.  Hence, we simply need to choose $\epsilon$ to ensure that \eqref{eq:T_final_c} (or equivalently, \eqref{eq:T_final2c}) holds, and substitute this value into the lower bound $T\epsilon$.

To do this, we choose $\epsilon$ such that \eqref{eq:T_final2c} nearly holds with equality:
\begin{equation}
    \epsilon \ge \sqrt{\frac{M\sigma^2}{8 (C')^2 T}}. \label{eq:eps_choice}
\end{equation}
Note that the right-hand side depends on $M$, while $M$ itself is expressed in terms of $\epsilon$ in \eqref{eq:M_se} and \eqref{eq:M_matern}. Hence, it remains to show that these choices are consistent, and in particular, to lower bound $\epsilon$ (and hence the cumulative regret bound $T\epsilon$) in terms of $T$ alone.

{\bf Application to the SE kernel:} For the SE kernel, we have from the choice $M = \Theta\big( \big(\log\frac{B}{\epsilon}\big)^{d/2} \big)$ in \eqref{eq:M_se}, along with the upper and lower bounds on $\epsilon$ in \eqref{eq:T_final2c} and \eqref{eq:eps_choice}, that 
\begin{equation}
    \epsilon = \Theta\bigg( \sqrt{\frac{\sigma^2}{T} \Big( \log\frac{B}{\epsilon} \Big)^{d/2} } \bigg). \label{eq:eps_choice2}
\end{equation}
We therefore deduce that
\begin{equation}
    \log\frac{B}{\epsilon} = \log\sqrt{\frac{TB^2}{\sigma^2}} - \log\bigg( \Theta(1) \Big( \log\frac{B}{\epsilon} \Big)^{d/4} \bigg) \label{eq:log_Be2}
\end{equation}
by a direct substitution of \eqref{eq:eps_choice2}.  Now, since we have assumed that $d = O(1)$, we find that the second term behaves as $O\big(\log\log\frac{B}{\epsilon}\big)$, which is at most $\frac{1}{2}\log\frac{B}{\epsilon}$ when $\frac{\epsilon}{B}$ is sufficiently small.  By moving this term to the left-hand side of \eqref{eq:log_Be2}, we find that $\log\frac{B}{\epsilon} = \Theta\big( \log\frac{B^2 T}{\sigma^2} \big)$, and substitution into \eqref{eq:eps_choice2} yields  $\epsilon = \Theta\big( \sqrt{\frac{\sigma^2}{T} \big( \log\frac{B^2 T}{\sigma^2} \big)^{d/2} } \big)$, meaning that the lower bound $T\epsilon$ yields \eqref{eq:cumul_SE}.  

Note that the behavior $\epsilon = \Theta\big( \sqrt{\frac{\sigma^2}{T} \big( \log\frac{B^2 T}{\sigma^2} \big)^{d/2} } \big)$ reveals that $\frac{\epsilon}{B}$ is indeed arbitrary small when the implied constant in the theorem assumption $\frac{\sigma}{B} = O(\sqrt{T})$ is sufficiently small (note that $z\big(\log\frac{1}{z}\big)^d \to 0$ as $z \to 0$).  This justifies the assumption of $\frac{\epsilon}{B}$ being sufficiently small throughout our analysis.

{\bf Application to the Mat\'ern kernel:} For the Mat\'ern kernel, we have from \eqref{eq:M_matern} that $M = \Theta\big( \big(\frac{B}{\epsilon}\big)^{d/\nu} \big)$, and substitution into \eqref{eq:eps_choice} gives $\epsilon^2 = \Theta\big( \frac{\sigma^2 B^{d/\nu}}{T\epsilon^{d/\nu} } \big)$, or equivalently $T = \Theta\big( \frac{\sigma^2 B^{d/\nu} }{\epsilon^{2+d/\nu}} \big)$.  This, in turn, implies that $\epsilon = \Theta\big( \sigma^{\frac{2}{2+d/\nu}} B^{\frac{d/\nu}{2+d/\nu}} T^{\frac{-1}{2+d/\nu}} \big)$, and hence the lower bound $T\epsilon$ yields \eqref{eq:cumul_Mat}.  Once again, we find that the assumption $\frac{\sigma}{B} = O\big(\sqrt{T}\big)$ (with a sufficiently small implied constant) in the theorem statement ensures that $\frac{\epsilon}{B}$ is sufficiently small.

\subsection{High-Probability Regret Bounds} \label{sec:HIGH_PROB}

Here we discuss how to adapt the above proofs to provide lower bounds on the regret that hold with a given probability $1-\delta$. 

For the simple regret, recall that we considered functions where all points are at most $4\epsilon$-suboptimal (implying $r^{(T)} \le 4\epsilon$), and showed that $\EE[r^{(T)}] \ge \epsilon$ for all $T < T_{\epsilon}$ and suitably-defined $T_{\epsilon}$.  For such $T$, it follows from the reverse Markov inequality (i.e., Markov's inequality applied to the random variable $4\epsilon - r^{(T)}$) that
\begin{equation}
    \PP[r^{(T)} \ge \eta\epsilon] \ge \frac{\epsilon - \eta\epsilon}{4\epsilon - \eta\epsilon} = \frac{1-\eta}{4-\eta}
\end{equation}
for any $\eta \in (0,1)$.  By choosing $\eta$ sufficiently small, and renaming $\eta\epsilon$ as $\epsilon'$, it follows that in order to achieve some target simple regret $\epsilon'$ with any constant probability above $\frac{3}{4}$, a lower bound of the same form as the average regret bound holds.

An analogous argument applies for the cumulative regret upon recalling that we chose $\epsilon$ such that the average cumulative regret was lower bounded by $T\epsilon$, while also considering functions such that the cumulative regret can never exceed $4T \epsilon$.

\section{Conclusion}

We have given, to our knowledge, the first lower bounds on regret for (non-Bayesian) Gaussian process bandit optimization in the presence of noise, considering both the simple regret and the cumulative regret.  These bounds nearly match existing upper bounds for the squared exponential kernel, with the gaps being more significant for the Mat\'ern kernel.  

An immediate direction for future research is to settle our conjecture in Section \ref{sec:results} regarding the presence of $\gamma_T$ vs.~$\gamma_T^2$ in the existing upper bounds, which would close the gaps present for the squared exponential kernel, and bring the bounds for the Mat\'ern kernel closer together.  Another interesting direction is to provide lower bounds and improved upper bounds for the \emph{Bayesian} setting.  In this setting, the ``needle in haystack'' type functions considered in the present paper are extremely unlikely to be observed, and it is thus reasonable to expect that the optimal scaling laws may be significantly milder than those for the RKHS setting.

\appendix

\section{Proof of Lemma \ref{lem:auer}}

We repeat the short proof from \cite{Aue95} here for completeness:
\begin{align}
    \EE_m[a(\yv)] - \EE_0[a(\yv)]
        &= \int a(\yv) \big( P_m(\yv) - P_0(\yv) \big) d\yv  \\
        &\le \int_{P_m(\yv) \ge P_0(\yv)} a(\yv) \big( P_m(\yv) - P_0(\yv) \big) d\yv \label{eq:auer_pf2} \\
        &\le A \int_{P_m(\yv) \ge P_0(\yv)} \big( P_m(\yv) - P_0(\yv) \big) d\yv \label{eq:auer_pf3}  \\
        &= \frac{A}{2} \| P_0 - P_m \|_1 \label{eq:auer_pf4}  \\
        &\le  A \sqrt{ D(P_0 \| P_m) }, \label{eq:auer_pf5} 
\end{align}
where \eqref{eq:auer_pf2}--\eqref{eq:auer_pf3} follow from the assumption that $a(\yv) \in [0,A]$, \eqref{eq:auer_pf4} is a standard property of the $\ell_1$-norm, and \eqref{eq:auer_pf5} follows from Pinsker's inequality.

\section*{Acknowledgment}

This work was supported in part by the European Commission under Grant ERC Future Proof, SNF 200021-146750 and SNF CRSII2-147633, and the `EPFL Fellows' program (Horizon2020 665667).


\end{document}